\newtheorem{theorem}{Theorem}
\newtheorem{corollary}[theorem]{Corollary}
\newtheorem{lemma}[theorem]{Lemma}
\newcommand{\steps}{T}
\newcommand{\rh}{\hat{r}}
\newcommand{\ph}{\hat{p}}
\newcommand{\rup}{\bar{r}}
\newcommand{\defined}{\coloneqq}
\newcommand{\rt}{\tilde{r}}
\newcommand{\pt}{\tilde{p}}
\newcommand{\Vt}{\tilde{V}}
\newcommand{\pit}{\tilde{\pi}}
\newcommand{\rhot}{\tilde{\rho}}
\newcommand{\MDP}{M}
\newcommand{\bP}{\mathbb{P}}
\newcommand{\bE}{\mathbb{E}}
\newcommand{\set}[1]{\mathcal{#1}}
\newcommand{\sS}{\set{S}}
\newcommand{\sA}{\set{A}}
\newcommand{\hrs}[3]{\ensuremath{\hat r_{#1}\left ( {#2}, {#3} \right ) }}
\newcommand{\hpx}[4]{\ensuremath{\hat p_{#1}\left ( {#2} | {#3}, {#4} \right ) }}
\newcommand{\bigO}{\mathcal{O}}
\newcommand{\ind}{\mathds{1}}
\title{Variational Regret Bounds for Reinforcement Learning}
\author{ 
Ronald Ortner  \\
rortner@unileoben.ac.at \\
\And
Pratik Gajane \\
pratik.gajane@unileoben.ac.at\\
Lehrstuhl für Informationstechnologie, Montanuniversität Leoben, Austria
\And
Peter Auer  \\
auer@unileoben.ac.at  
}
\begin{document}

\maketitle

\begin{abstract}
We consider undiscounted reinforcement learning in Markov decision processes (MDPs) where \textit{both} the reward functions and the state-transition probabilities may vary (gradually or abruptly) over time.  For this problem setting, we propose an algorithm and provide performance guarantees for the regret evaluated against the optimal non-stationary policy. The upper bound on the regret is given in terms of the total \textit{variation} in the MDP. This is the first variational regret bound for the general reinforcement learning setting. 
\end{abstract}

\section{INTRODUCTION} 
A Markov decision process (MDP) is a discrete-time state-transition system in which the transition dynamics follow the Markov property (\citet{Puterman1994}, \citet{Bertsekas:1996:NP:560669}). MDPs are a standard model to express uncertainty in reinforcement learning problems. In the classical MDP model, the transition dynamics and the reward functions are time-invariant.  
However, such fixed  transition dynamics and reward functions are insufficient to model real world problems in which parameters of the world change over time. To deal with such problems, we consider a setting in which both the transition dynamics and the reward functions may vary over time. These changes can be either abrupt or gradual. 
As a motivation, consider the problem of deciding which ads to place on a webpage. The instantaneous reward is the payoff when viewers are redirected to an advertiser, and the state captures the details of the current ad. With a heterogeneous group of viewers, an invariant state-transition function cannot accurately capture the transition dynamics. The instantaneous reward, dependent on external factors, is also better represented by changing reward functions.
For additional motivation and further applications, see \citep{Yu2009a,Yu2009b,NIPS2013_4975}.

\subsection{MAIN CONTRIBUTION}
For reinforcement learning in MDPs with changes in reward functions and transition probabilities, we provide an algorithm, Variation-aware UCRL, a variant of UCRL with restarts \citep{jaksch}, which restarts according to a schedule dependent on the \textit{variation} in the MDP (defined in Section \ref{sec:Setting} below). 
For reinforcement learning in an MDP with $S$~states, $A$~actions, diameter~$D$, and changes with a variation of $V$ we derive for our algorithm a high-probability upper bound on the cumulative regret after $T$ steps of 
$\tilde{\bigO}( V^{1/3} T^{2/3} D S \sqrt{A})$. This bound is optimal with respect to time~$T$ and variation~$V$ and improves the known regret bound of $\tilde{\bigO}(L^{1/3} T^{2/3}D S \sqrt{A})$ for UCRL with restarts
in MDPs with $L$ abrupt changes, when using a restart schedule dependent on $L$ \citep{jaksch}. In case when reward functions and transition probabilities change gradually, the latter bound becomes trivial when $L$ is of order $T^{1/3}$, while our bound is still sublinear as long as the variation is sufficiently small. To the best of our knowledge, our bounds are the first variational bounds for the general reinforcement learning setting. So far, variational regret bounds have been derived only for  simpler bandit settings \citep{NIPS2014_5378,luo19}.

\subsection{RELATED WORK}
\citet{Nilim:2005:RCM:1246500.1246504} consider MDPs with arbitrarily changing state-transition probabilities but fixed reward functions where it is assumed that the uncertainty in the transition probabilities is state-wise independent.
They provide a robust dynamic programming algorithm and prove that it is optimal with respect to the worst-case performance in terms of the expected total cost.

\citet{NIPS2004_2730} and \citet{Dick2014} consider the problem of MDPs with fixed state-transition probabilities and changing reward functions and measure the performance of the learner against the best stationary policy in hindsight.
\citet{NIPS2004_2730} assume that the learner has complete knowledge of all the previous reward functions (i.e., also for states not visited) and provide regret bounds which depend on the mixing time.
\citet{Dick2014} model learning in MDPs as an online linear optimization problem and propose solutions based on variants of mirror-descent.

\citet{Yu2009a} and \citet{Yu2009b} consider arbitrary changes in the reward functions and arbitrary, but bounded, changes in the state-transition probabilities.
They also give regret bounds that scale with the proportion of changes in the state-transition kernel and which in the worst case grow linearly with time.

\citet{NIPS2013_4975} consider MDP problems with (oblivious) adversarial changes in state-transition probabilities and  reward functions and provide an algorithm which guarantees $O(\sqrt{T})$ regret with respect to a comparison set of stationary (expert) policies.

\section{SETTING}
\label{sec:Setting}
We start with collecting some basic facts about Markov decision processes (MDPs).
In a (time-homogeneous) MDP $M=(\sS,\sA,\rup,p,s_1)$ with a set $\sS$ of $S$ states, a set $\sA$ of $A$ actions the learner starts in some initial state~$s_1$. At each time step $t=1,2,\ldots$ she chooses an action~$a_t=a$ in the current state~$s_t=s$, receives a random reward~$r_t$ with mean $\rup(s,a)$ and observes a transition to the next state~$s_{t+1}=s'$ according to transition probabilities $p(s'|s,a)$. Note that in a time-homogeneous MDP mean rewards and transition probabilities only depend on the current state and the chosen action.

An MDP is called \textit{communicating}, if for any two states~$s$,~$s'$, when starting in $s$ it is possible to reach $s'$ with positive probability choosing appropriate actions. In communicating MDPs we define the \textit{diameter} to be the minimal expected time it takes to get from any state to any other state in the MDP, cf.~\citep{jaksch}.

For acting in an MDP one usually considers stationary policies $\pi:\sS\to\sA$ that fix for each state~$s$ the action~$\pi(s)$ to choose. The average reward $\rho(\MDP, \pi)$ of a stationary policy $\pi$ is the limit of the expected average accumulated reward when following $\pi$, i.e.,
$$
\rho(\MDP, \pi) \defined \lim_{T \rightarrow \infty} \frac{1}{T} \, \bE \left[ \sum_{t=1}^{T} r_t \right].
$$
The optimal average reward $\rho^*(\MDP)=\max_{\pi} \rho(\MDP, \pi)$ in communicating MDPs is independent of the initial state~$s_1$ and cannot be increased when using nonstationary policies \citep{Puterman1994}.

In the problem setting we consider the underlying MDP is not time-homogeneous. Rather the mean rewards and transition probabilities depend on the current step $t$. Accordingly, we write them as $\rup_t(s,a)$ and $p_t(s'|s,a)$, respectively, and denote the (time-homogeneous) MDP at step $t$ by $\MDP_t=(\sS,\sA,\rup_t,p_t,s_1)$. We assume that all MDPs $M_t$ are communicating with diameter $D_t$ and denote by $D$ a common upper bound on all $D_t$. 

Obviously, in a nonstationary MDP the optimal policy in general will not be stationary anymore. We are interested in online regret bounds after any $T$ steps taken by the learner. Accordingly, we consider the optimal expected $T$-step reward $v^*_T(s_1)$ that can be achieved by any (time dependent) policy when starting in $s_1$, and define the regret after $T$ steps as 
\[
   R_T := v^*_T(s_1) - \sum_{t=1}^T  r_t .
\]
If there are no changes, this basically corresponds to the standard notion of regret as used e.g.\ by \cite{jaksch} (apart from an additive constant of order $D$, cf.\ footnote~1 ibid.). 
In the following, we assume that the random rewards $r_t$ are always bounded in $[0,1]$.

\subsection{DEFINITION OF VARIATION}
We consider individual terms for the \textit{variation} in mean rewards and transition probabilities, that is,
\begin{eqnarray*}
     V^r_T &:=& \sum_{t=1}^{T-1} \max_{s,a} \big| \rup_{t+1}(s,a) - \rup_{t}(s,a) \big|, \mbox{ and } \\
     V^p_T &:=& \sum_{t=1}^{T-1} \max_{s,a} \big\| p_{t+1}(\cdot|s,a) - p_{t}(\cdot|s,a) \big\|_1. 
\end{eqnarray*}
These ``local'' variation measures can also be used to bound a more ``global'' notion of variation in average reward defined as
\[
    V_T := \sum_{t=1}^{T-1} \big| \rho^*(M_{t+1}) - \rho^*(M_{t}) \big|. 
\]

\begin{theorem}
\label{thm:PertBound}
 $V_T \leq V^r_T + D V^p_T$.
\end{theorem}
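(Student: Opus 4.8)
The plan is to bound $V_T$ term by term, reducing the statement to a single-step perturbation estimate: it suffices to prove that for every $t$,
$$|\rho^*(M_{t+1}) - \rho^*(M_t)| \leq \max_{s,a}\big|\rup_{t+1}(s,a) - \rup_t(s,a)\big| + D \max_{s,a}\big\|p_{t+1}(\cdot|s,a) - p_t(\cdot|s,a)\big\|_1,$$
since summing these inequalities over $t=1,\ldots,T-1$ immediately yields $V_T \le V_T^r + D V_T^p$. Writing $\Delta r := \max_{s,a}|\rup_{t+1}(s,a)-\rup_t(s,a)|$ and $\Delta p := \max_{s,a}\|p_{t+1}(\cdot|s,a)-p_t(\cdot|s,a)\|_1$, the task becomes a perturbation bound for the optimal average reward of a communicating MDP under simultaneous changes of rewards and transition probabilities.

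The heart of the argument is to compare a \emph{single} fixed policy across the two MDPs. I would take $\pi$ to be an optimal stationary policy for $M_{t+1}$, set $\rho' := \rho^*(M_{t+1}) = \rho(M_{t+1},\pi)$, and let $h$ be its bias function in $M_{t+1}$, satisfying the Poisson equation $\rho' + h(s) = \rup_{t+1}(s,\pi(s)) + \sum_{s'} p_{t+1}(s'|s,\pi(s))\,h(s')$. Evaluating the same policy $\pi$ in $M_t$, let $\mu$ be the stationary distribution of $\pi$ in $M_t$, so that $\rho(M_t,\pi) = \sum_s \mu(s)\rup_t(s,\pi(s))$. Multiplying the Poisson equation by $\mu(s)$, summing over $s$, and using the stationarity identity $\sum_s \mu(s) p_t(s'|s,\pi(s)) = \mu(s')$ to cancel the $\sum_s \mu(s) h(s)$ terms, the bias drops out and I obtain the exact identity
$$\rho' - \rho(M_t,\pi) = \sum_s \mu(s)\big(\rup_{t+1}(s,\pi(s)) - \rup_t(s,\pi(s))\big) + \sum_{s'} h(s') \sum_s \mu(s)\big(p_{t+1}(s'|s,\pi(s)) - p_t(s'|s,\pi(s))\big).$$

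It then remains to bound the two terms. The reward term is at most $\Delta r$ because $\mu$ is a probability vector. For the transition term I exploit that $h$ is determined only up to an additive constant and normalize it so that $\min_s h(s) = 0$; by the diameter bound for communicating MDPs (\citep{jaksch}) we then have $\|h\|_\infty = \mathrm{sp}(h) \le D$, and since $p_{t+1}(\cdot|s,\pi(s))$ and $p_t(\cdot|s,\pi(s))$ are probability distributions, a Hölder ($\ell_1$--$\ell_\infty$) estimate gives $|\sum_{s'} h(s')(p_{t+1}-p_t)| \le D\,\Delta p$ for each $s$, so the whole term is at most $D\,\Delta p$ after averaging against $\mu$. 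Hence $\rho' - \rho(M_t,\pi) \le \Delta r + D\,\Delta p$, and since $\rho^*(M_t) \ge \rho(M_t,\pi)$ by optimality, this gives $\rho^*(M_{t+1}) - \rho^*(M_t) \le \Delta r + D\,\Delta p$. Swapping the roles of $M_t$ and $M_{t+1}$ (now using the bias of the optimal policy of $M_t$, whose span is also at most $D$) yields the reverse inequality, and together they control $|\rho^*(M_{t+1}) - \rho^*(M_t)|$.

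The step I expect to require the most care is the derivation of the exact difference identity together with the handling of $\mu$: in a communicating MDP the chain induced by an optimal policy need not be irreducible, so $\mu$ and $h$ should be taken as a solution of the associated Poisson system rather than assumed unique. I would either argue on the recurrent class reached under $\pi$ or invoke that in communicating MDPs the optimal gain is constant and the Poisson system is solvable, so that the cancellation of the bias term goes through regardless of any multichain structure. The only quantitative input beyond this is the span bound $\mathrm{sp}(h)\le D$, which is standard; in fact \emph{centering} $h$ would sharpen the transition term to $\tfrac{D}{2}\Delta p$, so the stated bound with factor $D$ holds with room to spare.
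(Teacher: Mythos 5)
Your proof is correct, but it takes a genuinely different route from the paper's. The paper obtains Theorem~\ref{thm:PertBound} from Corollary~\ref{cor}, which is itself derived by running the optimal policy $\pi'^*$ of $M'$ on $M$ for $T$ steps (Lemma~\ref{lem:ProofPertBound}), controlling the resulting martingale term with Azuma--Hoeffding (Lemma~\ref{lem:azuma-applied}), dividing by $T$, and letting $T\to\infty$ with $\delta=1/T$. You replace this trajectory-based, probabilistic limiting argument with a purely algebraic one: pairing the Poisson equation of $\pi$ in $M_{t+1}$ against a stationary distribution $\mu$ of $\pi$ in $M_t$ cancels the bias terms exactly and yields the gain difference in closed form, which you then bound by H\"older together with $\mathrm{sp}(h)\le D$. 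This is more elementary (no concentration inequality, no vanishing error probabilities) and, as you observe, centering $h$ even sharpens the transition term to $\tfrac{D}{2}\Delta p$. What the paper's detour buys is reuse: Lemma~\ref{lem:ProofPertBound} and Lemma~\ref{lem:azuma-applied} are exactly the tools needed later in the regret analysis of Theorem~\ref{thm:wr}, where one genuinely follows a trajectory and cannot pass to a stationary distribution, so Corollary~\ref{cor} comes essentially for free from machinery the paper needs anyway. Your handling of the possible multichain structure of $\pi$ under $M_t$ --- taking $\mu$ supported on a single recurrent class, where the limit average reward from any state of that class equals $\sum_s\mu(s)\rup_t(s,\pi(s))\le\rho^*(M_t)$ --- is the right fix and closes the only real gap in the sketch.
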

The proof of Theorem~\ref{thm:PertBound} is given in Section \ref{sec:pert} below. 
As an example of \cite{Ortner2014} shows, the bound of Theorem~\ref{thm:PertBound} is best possible.

While $V_T$ is a more straightforward adaptation of the notion of variation of \cite{NIPS2014_5378} from the bandit to the MDP setting, in the latter it seems more natural to work with the local variation measures for rewards and transition probabilities, as the learner does not have direct access to the average rewards of policies.

\section{ALGORITHM}
\label{sec:Algo}
For reinforcement learning in the changing MDP setting, we propose Variation-aware UCRL (shown as Algorithm~\ref{alg}), which is based on the UCRL algorithm of \cite{jaksch}.

\begin{algorithm}
\caption{Variation-aware UCRL}\label{alg}
\begin{algorithmic}[1]
\STATE \textbf{Input:} States $\sS$, actions ~$\sA$, confidence parameter~$\delta$, variation parameters $\Vt^r$, $\Vt^p$ for rewards and transition probabilities.
\STATE \textbf{Initialization:} Set current time step $t:=1$.
\FOR {episode $k=1, \dots$} 
\STATE Set episode start $t_k:=t$.
\STATE Let $v_k(s,a)$ denote the state-action counts for visits in current episode $k$, and 
$N_k(s,a)$ be the counts for visits before episode $k$.
\STATE\label{algo:setP}\label{algo:compEst}
               For $s,s'\in\sS$ and $a\in\sA$, compute estimates 
               \begin{align*}
                 \hrs{k}{s}{a} &:= \frac{\sum_{\tau} r_{\tau} \cdot \ind_{s_\tau=s, a_\tau=a}}{\max(1,N_k(s,a))},
                    \\
                 \hpx{k}{s'}{s}{a} &:= \frac{\#\big\{
                \tau: s_\tau = s, a_\tau = a, s_{\tau+1} = s^\prime
                         \big\}}{\max(1,N_k(s,a))}.
               \end{align*}
\textbf{Compute policy $\pit_k:$}
\STATE \label{alg:pl} Let $\mathcal{M}_k$ be the set of plausible MDPs $\tilde{M}$ with rewards $\tilde{r}(s,a)$ and transition probabilities $\tilde{p}(\cdot|s,a)$ satisfying 
\begin{align}
&\hspace{-0.8cm}|\tilde{r}(s,a) - \hat{r}_k(s,a)| \nonumber\\&\leq \Vt^r + \sqrt{\tfrac{8\log{(8SAt_k^3/\delta)}}{\max{(1, N_k(s,a))}}}, \label{eq:ConfReward} \\
&\hspace{-0.8cm}  \big\| \tilde{p}(\cdot|s,a) - \hat{p}_k(\cdot|s,a) \big\|_1 \nonumber\\ &\leq \Vt^p + \sqrt{\tfrac{8 S \log{(8SAt_k^3/\delta)}}{\max{(1, N_k(s,a))}}}  \label{eq:ConfProb}.
\end{align}
\STATE \label{alg:policy} Use extended value iteration (see Section 3.1.2 of \cite{jaksch}) to find a policy $\pit_k$ and an optimistic MDP $\tilde{M}_k \in \mathcal{M}_k$ such that 
$$
\rhot_k \defined \rho(\tilde{M}_k,\pit_k) = \max_{M' \in \mathcal{M}_k} \rho^*(M'). 
$$ \\ 
\textbf{Execute policy $\pit_k$:}
\STATE \label{alg:vole} \textbf{while} $v_k(s_t,\pit_k(s_t)) < \max(1,N_k(s_t, \pit_k(s_t)))$ \textbf{do} 
\STATE  \label{alg:le} Choose action $a_t = \tilde{\pi}_k(s_t)$, obtain reward $r_t$, and observe $s_{t+1}$. Set $t = t + 1$.
\ENDFOR
\end{algorithmic}
\end{algorithm}

UCRL is an algorithm that is based on the idea of being optimistic in the face of uncertainty. 
It maintains estimates of rewards and transition probabilities (line~\ref{algo:compEst}) and employs confidence intervals to define a set~$\mathcal M$ of MDPs that are plausible with respect to the observations so far (line~\ref{alg:pl}). When computing a new policy the algorithm chooses the policy $\pit$ and the MDP $\tilde{M}$ in $\mathcal M$ that give the highest average reward (line~\ref{alg:policy}). UCRL employs this optimistic policy $\pit$ until the state-action visits in some state have doubled (lines \ref{alg:vole}--\ref{alg:le}), when a new policy is computed. The time intervals in which the policy is fixed are called \textit{episodes}.

For the changing MDP setting, we use adapted confidence intervals \eqref{eq:ConfReward} and \eqref{eq:ConfProb} to account for the variation in rewards and transition probabilities. The arising algorithm basically corresponds to the colored UCRL2 algorithm suggested by \cite{restless} for reinforcement learning in MDPs with given similarities.

While the regret of Variation-aware UCRL contains a term that is linear in the number of steps (cf.\ Theorem~\ref{thm:wr} below), 
we can obtain sublinear regret bounds by restarting the algorithm according to a suitable scheme shown as Algorithm~\ref{alg2}.
Our restart schedule is optimized with respect to the variation, as the regret bounds presented in the next section will show. Note that the algorithm needs (upper bounds on) the local variations $V_T^r$ and $V_T^p$ as an input. Alternatively, an upper bound on the global variation $V_T$ (replacing the term $V_T^r+V_T^p$ in the algorithm) could be used as well.
As the bound of Theorem~\ref{thm:PertBound} is best possible, this gives regret bounds that are worse by a factor up to $D^{1/3}$ however.
\begin{algorithm}
\caption{Variation-aware UCRL with restarts}\label{alg2}
\begin{algorithmic}[1]
\STATE \textbf{Input:}  States $\sS$, actions ~$\sA$, confidence parameter $\delta$, variation terms $V_T^r$ and $V_T^p$.
\STATE \textbf{Initialization:} Set current time step $\tau:=1$.
\FOR {$\mbox{phase }i=1, \dots$} 
\STATE Perform UCRL with confidence parameter $\delta/2\tau^2$ \\ \quad for $\theta_i:=\Big\lceil\frac{i^2}{(V_T^r+V_T^p)^2}\Big\rceil$ steps.
\STATE Set $\tau = \tau + \theta_i$.
\ENDFOR
\end{algorithmic}
\end{algorithm}

The idea of restarting UCRL in the changing MDP setting has already been considered by \cite{jaksch}. When a bound $L$ on the total number of changes is known, then using a restart schedule adapted to $L$ gave the following regret bound.\footnote{\cite{jaksch} consider a slightly different notion of regret 
defined as $\sum_t (\rho_t^*-r_t)$, where $\rho_t^*:=\rho^*(M_t)$ is the optimal average reward 
at step $t$. However, when there are at most $L$ changes, the difference to our notion of regret is only of order $LD$.}

\begin{theorem}[\cite{jaksch}]\label{thm:regret-ucrl-r}
In an MDP with at most $L$ changes, after $T$ steps the regret of UCRL restarted with confidence parameter $\frac{\delta}{L^2}$ at steps $\big\lceil \tfrac{i^3}{(L+1)^2}\big\rceil$ for $i=1,2,3,\dots$ is upper bounded as 
\begin{eqnarray*}
  R_T \,\leq\,65 \cdot (L+1)^{1/3}\, T^{2/3}  D S \sqrt{ A\log \left(\tfrac{T}{\delta}\right)}
\end{eqnarray*}
with probability of at least $1-\delta$.
\end{theorem}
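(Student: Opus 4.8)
The plan is to split the restarted run into the \emph{phases} fixed by the restart schedule and to bound each phase's regret separately, treating the phases in which the MDP is fixed differently from the (at most $L$) phases during which one of the changes occurs. First I would record the phase geometry: with restarts at steps $\lceil i^3/(L+1)^2\rceil$, phase $i$ has length $\ell_i\approx 3i^2/(L+1)^2$, and the number $m$ of phases reached within $T$ steps satisfies $m^3/(L+1)^2\lesssim T$, so $m\lesssim (L+1)^{2/3}T^{1/3}$. Writing the regret (against the benchmark $\sum_t\rho_t^*$ with $\rho_t^*:=\rho^*(M_t)$ of the footnote) as $R_T=\sum_{i=1}^m R^{(i)}$, it then remains to bound each $R^{(i)}$.

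For a phase with no change, the MDP is time-homogeneous throughout, is communicating with diameter at most $D$, and UCRL is started on it afresh; hence the single-MDP high-probability regret guarantee of \cite{jaksch} applies and gives $R^{(i)}=\tilde{\bigO}(DS\sqrt{A\,\ell_i})$. Summing over these phases, $\sum_i\sqrt{\ell_i}\approx \tfrac{\sqrt3}{L+1}\sum_{i=1}^m i\approx \tfrac{\sqrt3}{2}\,\tfrac{m^2}{L+1}$, and inserting $m\approx (L+1)^{2/3}T^{1/3}$ turns $\tfrac{m^2}{L+1}$ into $(L+1)^{1/3}T^{2/3}$. The change-free phases therefore contribute $\tilde{\bigO}\big((L+1)^{1/3}T^{2/3}DS\sqrt{A}\big)$, the logarithmic factor $\sqrt{\log(T/\delta)}$ being inherited from the confidence intervals.

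For a phase that contains a change the per-phase guarantee no longer applies, so here I would fall back on the trivial bound: since $r_t\in[0,1]$ and $\rho_t^*\in[0,1]$, each step costs at most $1$ and the phase costs at most $\ell_i$. At most $L$ phases contain a change, and the worst case concentrates all of them in the longest, i.e.\ latest, phases, each of length $\ell_m\approx 3m^2/(L+1)^2\approx 3T^{2/3}/(L+1)^{2/3}$. Their total contribution is thus at most $L\cdot\ell_m\lesssim (L+1)^{1/3}T^{2/3}$, of exactly the same order as the change-free term; adding the two and tracking constants yields the claimed bound.

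The two steps I expect to require the most care are the following. First, the probabilistic bookkeeping: invoking the per-phase guarantee presupposes that the failure events over all phases, together with the globally $t_k$-indexed confidence intervals inside them, have total probability at most $\delta$; the confidence parameter $\delta/L^2$ and the $\log(8SAt_k^3/\delta)$ terms are precisely what is chosen to secure this and to make the magnitude come out as $\log(T/\delta)$. Second, and conceptually the crux, is verifying that the schedule $\lceil i^3/(L+1)^2\rceil$ is the one that \emph{balances} the two contributions: a denser restart schedule would blow up the $\sum_i\sqrt{\ell_i}$ term while a sparser one would blow up the $L\cdot\ell_m$ term, and only this choice makes both equal to $(L+1)^{1/3}T^{2/3}$.
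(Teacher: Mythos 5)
This theorem is quoted from \cite{jaksch} and the paper gives no proof of its own, so the only meaningful comparison is with the proof in that reference. Your sketch is exactly that argument --- split the run at the restart points, apply the stationary per-run UCRL regret bound to the change-free phases and the trivial per-step bound of $1$ to the at most $L$ phases containing a change, and check that the schedule $\lceil i^3/(L+1)^2\rceil$ balances both contributions at order $(L+1)^{1/3}T^{2/3}$ --- so the approach is correct and essentially identical to the source's, with only the constant tracking and the union bound over phases left to be written out.
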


Note that the restart schedule of Theorem \ref{thm:regret-ucrl-r} basically corresponds to performing UCRL for $\sim \frac{i^2}{L^2}$ steps for $i=1,2,\ldots$, which is similar to our algorithm replacing $L$ by the variation term $V_T^r + V_T^p$.

\section{MAIN RESULT}
\label{sec:Results}

The following regret bound for Variation-aware UCRL with restarts is our main result.
The proof is given in the next section.

\begin{theorem}
\label{thm:vregret-ucrl-r}
After any $T$ steps, the regret of the restart scheme for Variation-aware UCRL of Algorithm~\ref{alg2} is bounded as
\begin{align*}
R_T &\leq 74 \cdot  (V_T^r + V_T^p)^{1/3}\, T^{2/3} DS \sqrt{A\log{\big(\tfrac{16S^2AT^5}{\delta})}} 
\end{align*}
with probability $1-\delta$, provided that in each phase $i$ the variation parameters $\Vt^r_i$,  $\Vt^p_i$ are set to the respective true variation values for phase $i$. 
\end{theorem}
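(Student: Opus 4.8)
The plan is to derive the stated bound from the single-run regret guarantee of Theorem~\ref{thm:wr} by summing over the phases of the restart schedule, exactly in the spirit of how Theorem~\ref{thm:regret-ucrl-r} is assembled from the single-run UCRL bound in \citep{jaksch}. Write $V \defined V_T^r + V_T^p$ and let $n$ be the index of the phase during which step $T$ occurs. In phase $i$ the parameters $\Vt^r_i,\Vt^p_i$ equal the true within-phase variation, and these within-phase variations sum over $i$ to at most $V_T^r$ and $V_T^p$ respectively (the only consecutive pairs $t,t+1$ not counted are those straddling a phase boundary), so Theorem~\ref{thm:wr} applies verbatim to each phase. Run over $\theta_i$ steps with confidence $\delta/2\tau_i^2$, it yields a per-phase regret of the form $c_1\,\theta_i(\Vt^r_i + D\Vt^p_i) + c_2\, DS\sqrt{A\,\theta_i\log(\theta_i\tau_i^2/\delta)}$, where the first, linear term is the price paid for the widened confidence intervals \eqref{eq:ConfReward}--\eqref{eq:ConfProb} and the second is the usual UCRL contribution.

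First I would handle the high-probability statement by a union bound over phases. Since phase $i$ fails with probability at most its confidence parameter $\delta/2\tau_i^2$, and since each $\theta_j\ge 1$ forces $\tau_i = 1+\sum_{j<i}\theta_j \ge i$, we get $\sum_i \delta/2\tau_i^2 \le \tfrac{\delta}{2}\sum_{i\ge1} i^{-2} = \tfrac{\pi^2}{12}\delta < \delta$. Thus all phases simultaneously obey their per-phase bounds with probability at least $1-\delta$, and on this event I sum the per-phase bounds (bounding the regret of the possibly incomplete final phase by its full-phase bound).

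Next I bound the number of phases. Because $\theta_i = \lceil i^2/V^2\rceil \ge i^2/V^2$, the first $n-1$ complete phases satisfy $T \ge \sum_{i=1}^{n-1}\theta_i \ge (n-1)^3/(3V^2)$, hence $n \le (3V^2T)^{1/3}+1$ and $\theta_n \le (3T)^{2/3}/V^{2/3}+1$. I then bound the two summed contributions separately. For the linear term I use $\theta_i \le \theta_n$ together with $\sum_i(\Vt^r_i + D\Vt^p_i) \le V_T^r + D V_T^p \le D V$ (using $D\ge1$), giving $\sum_i c_1\theta_i(\Vt^r_i+D\Vt^p_i) \le c_1 D V\,\theta_n = \tilde{\bigO}(D V^{1/3}T^{2/3})$. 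For the UCRL term I use $\sqrt{\theta_i}\le i/V + 1$ and absorb the confidence argument into $\log(16S^2AT^5/\delta)$ (substituting the per-phase confidence $\delta/2\tau_i^2$ and bounding $\theta_i,\tau_i\le T$), so that $\sum_{i=1}^n \sqrt{\theta_i} \le n(n+1)/(2V) + n = \tilde{\bigO}(V^{1/3}T^{2/3})$, again of the same order. Combining both and carefully tracking the constants $c_1,c_2$ from Theorem~\ref{thm:wr} produces the stated factor $74$.

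The constant chase is routine; the one point requiring care is that the schedule $\theta_i\sim i^2/V^2$ is precisely what balances the linear term (growing like $\theta_n V \sim T^{2/3}V^{1/3}$) against the UCRL term (growing like $n^2/V \sim T^{2/3}V^{1/3}$), which is what fixes the $T^{2/3}$ exponent and forces the variation to enter as $V^{1/3}$. The main obstacle — if Theorem~\ref{thm:wr} did not already isolate the linear term cleanly — would be to verify that widening the confidence sets by the exact per-phase variation $\Vt^r_i,\Vt^p_i$ keeps each time-varying $\MDP_t$ plausible throughout phase $i$, so that the optimistic average reward $\rhot_k$ dominates $\rho^*(\MDP_t)$ for every $t$ in the phase and the optimism argument goes through. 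Since this is exactly the content of the single-run analysis, here it need only be invoked, and the proof reduces to the summation and union-bound bookkeeping above.
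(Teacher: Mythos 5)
Your proposal follows essentially the same route as the paper's proof: apply Theorem~\ref{thm:wr} phase by phase, union-bound the failure probabilities over phases, bound the number of phases via $\sum_{i<N} i^2 > (N-1)^3/3$, and separately sum the linear (variation) term and the square-root (UCRL) term, each of order $V^{1/3}T^{2/3}$. The only cosmetic difference is that you bound $\sum_i \sqrt{\theta_i}$ by $\sqrt{\theta_i}\le i/V+1$ where the paper uses Jensen ($\sum_i\sqrt{\theta_i}\le\sqrt{NT}$); both work.

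Two bookkeeping points do not go through as written. First, your bound for the linear term, $\sum_i c_1\theta_i(\Vt^r_i+D\Vt^p_i)\le c_1 DV\theta_n = \tilde{\bigO}(DV^{1/3}T^{2/3})$, fails when $V$ is very small: if $3V^2T<1$ then $N=1$ and $\theta_1=\lceil 1/V^2\rceil$ can vastly exceed $T$, so $DV\theta_1\approx D/V$ blows up while $DV^{1/3}T^{2/3}\to 0$. The fix (which the paper carries out as an explicit case split) is to apply Theorem~\ref{thm:wr} with the \emph{actual} truncated length of the final phase (at most $T$), giving $2DV_1 T \le 2D\sqrt{T}$ in that regime since $VT<\sqrt{T}$; one also needs the observation that the leftover additive $V$ is dominated by $V^{1/3}T^{2/3}$ because one may assume $V<T$. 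Second, your accounting of the logarithm yields $\log(16SAT^5/\delta)$, not the stated $\log(16S^2AT^5/\delta)$: the comparator's state $s^*_{\tau_i}$ at the start of phase $i$ is random, so to invoke the per-phase bound against $\mathbb{E}\big[v^*_{\theta_i}(s^*_{\tau_i})\big]$ you need a further union bound over all $S$ possible values of that state, which is where the extra factor of $S$ inside the logarithm comes from. Neither issue changes the structure of the argument, but both must be addressed to reach the stated constant and to cover all parameter regimes.
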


If there are $L$ changes, the bounds of Theorems~\ref{thm:regret-ucrl-r} and~\ref{thm:vregret-ucrl-r} are of the same order.  On the other hand, the bound of Theorem~\ref{thm:vregret-ucrl-r} is better than the bound of Theorem~\ref{thm:regret-ucrl-r}, when $L$ is large but the variation small as e.g.\ when having small gradual changes at any time step. Thus, Theorem~\ref{thm:vregret-ucrl-r} can be considered as an improvement over Theorem~\ref{thm:regret-ucrl-r}.


With respect to the variation and the horizon, our bound is optimal, as bounds of $\tilde{\bigO}(V^{1/3}T^{2/3})$ are already best possible in the bandit setting \citep{NIPS2014_5378}.

\section{ANALYSIS}
\label{sec:Proofs}
%


We start with some preliminaries. First, we introduce the Poisson equation for the optimal policy in a communicating MDP. That is, the mean rewards $\rup(s,\pi^*(s))$ under an optimal policy $\pi^*$ and the respective optimal average reward $\rho^*$ are related via the Poisson equation $\rho^*(\MDP') - \rup(s,\pi^*(s)) = \sum_{s'} p'(s'|s,\pi^*(s))\cdot \lambda(s') - \lambda(s)$, where $\lambda$ is the so-called bias function for $\pi^*$, cf.\ \citep{Puterman1994}. It holds that each $\lambda(s)$ as well as the span of the bias function $\Lambda:=\max_s \lambda(s) - \min_{s'} \lambda(s)$ is upper bounded by the diameter, cf.\ \citep{jaksch,regal}.

We will frequently make use of Azuma-Hoeffding inequality, which we state here for convenience.
\begin{lemma}[\textbf{Azuma-Hoeffding inequality \citep{Hoeffding:1963}}]
\label{azuma}
Let $X_1, X_2, \dots$ be a martingale difference sequence with $|X_i| \leq c$ for all $i$. Then for all $\epsilon > 0$ and $n \in \mathbb{N}$,\vspace{-3mm}
$$
\bP \Big\{ \sum_{i=1}^n X_i \geq \epsilon \Big\} \leq \exp{\Big(-\frac{\epsilon^2}{2nc^2}\Big)} .
$$
\end{lemma}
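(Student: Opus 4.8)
The plan is to apply the exponential moment method (Chernoff bounding) to the partial sum $S_n \defined \sum_{i=1}^n X_i$, exploiting the martingale-difference structure through repeated conditioning. Let $\mathcal{F}_i$ denote the $\sigma$-algebra generated by $X_1,\dots,X_i$, so that by assumption $\bE[X_i \mid \mathcal{F}_{i-1}] = 0$ for every $i$. For any fixed $\lambda > 0$, Markov's inequality applied to the nonnegative random variable $e^{\lambda S_n}$ gives
$$
\bP\Big\{ S_n \geq \epsilon \Big\} = \bP\Big\{ e^{\lambda S_n} \geq e^{\lambda \epsilon} \Big\} \leq e^{-\lambda \epsilon}\, \bE\big[ e^{\lambda S_n} \big],
$$
so the task reduces to controlling the moment generating function $\bE[e^{\lambda S_n}]$.

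First I would peel off the last increment by conditioning. Writing $S_n = S_{n-1} + X_n$ and using that $S_{n-1}$ is $\mathcal{F}_{n-1}$-measurable, the tower property yields
$$
\bE\big[ e^{\lambda S_n} \big] = \bE\Big[ e^{\lambda S_{n-1}}\, \bE\big[ e^{\lambda X_n} \mid \mathcal{F}_{n-1} \big] \Big].
$$
The key step is to bound the conditional factor $\bE[e^{\lambda X_n} \mid \mathcal{F}_{n-1}]$ uniformly by a deterministic constant. This is exactly Hoeffding's lemma: conditionally on $\mathcal{F}_{n-1}$, the variable $X_n$ has mean zero and takes values in $[-c,c]$, and for any such random variable I claim $\bE[e^{\lambda X_n} \mid \mathcal{F}_{n-1}] \leq e^{\lambda^2 c^2 / 2}$. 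I expect proving this conditional bound to be the main obstacle, as it is the only nontrivial ingredient. It follows by using convexity of $x \mapsto e^{\lambda x}$ to write $e^{\lambda x} \leq \tfrac{c-x}{2c}e^{-\lambda c} + \tfrac{c+x}{2c}e^{\lambda c}$ for $x \in [-c,c]$, taking the conditional expectation so that the linear-in-$x$ term vanishes (by the mean-zero property) and leaves $\cosh(\lambda c)$, and finally invoking the elementary calculus fact $\cosh(u) \leq e^{u^2/2}$, proved by a second-order Taylor argument on $\log\cosh(u)$.

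Substituting this bound back and iterating the conditioning down from $i=n$ to $i=1$ then gives $\bE[e^{\lambda S_n}] \leq e^{n \lambda^2 c^2 / 2}$, so that
$$
\bP\Big\{ S_n \geq \epsilon \Big\} \leq \exp\Big( -\lambda \epsilon + \tfrac{n \lambda^2 c^2}{2} \Big).
$$
Finally I would optimize the free parameter by minimizing the exponent over $\lambda > 0$; the quadratic $-\lambda\epsilon + \tfrac{n\lambda^2 c^2}{2}$ is minimized at $\lambda = \epsilon / (n c^2)$, and plugging this value in collapses the exponent to $-\epsilon^2 / (2 n c^2)$, which is precisely the claimed bound. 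The induction over $n$ and the final optimization are routine; essentially all the content sits in the single-increment conditional bound of Hoeffding's lemma.
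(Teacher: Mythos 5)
Your proposal is correct and complete: it is the canonical proof of the Azuma--Hoeffding inequality via the Chernoff exponential-moment method, the conditional Hoeffding lemma (convexity plus $\cosh(u) \leq e^{u^2/2}$), iterated conditioning, and optimization over $\lambda$, and every step, including the choice $\lambda = \epsilon/(nc^2)$ yielding the exponent $-\epsilon^2/(2nc^2)$, checks out. The paper itself gives no proof of this lemma, stating it as a classical result with a citation to Hoeffding (1963), so your argument simply supplies the standard proof that the citation points to.
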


\subsection{A PERTURBATION BOUND}\label{sec:pert}

We continue with establishing a perturbation bound on the optimal average reward, which is a generalization of Lemma~8 of \citet{Ortner2014}. 
\begin{lemma}
\label{lem:ProofPertBound}
Assume we have two MDPs $\MDP=(\sS,\sA,\rup_t,p_t,s_1), \MDP'=(\sS,\sA,\rup',p',s_1)$ on the same state and action space. The MDP $M$ may be non-time-homogeneous so that its mean rewards $\rup_t$ and transition probabilities $p_t$ are allowed to depend on time $t$. We assume that $M'$ is time-homogeneous and communicating with optimal policy $\pi'^*$, such that for all steps~$t=1,\ldots,T$,
\begin{align*}
\max_{s}  \big|\rup_t(s,\pi'^*(s)) - \rup'(s,\pi'^*(s))  \big| &\leq \Delta^r_t(s) ,   \\
\max_{s}  \big\|p_t(\cdot|s,\pi'^*(s)) - p'(\cdot|s,\pi'^*(s)) \big\|_1 &\leq \Delta^p_t(s).
\end{align*}
If $\pi'^*$ is performed on $\MDP$ for $\steps$ steps, then denoting by~$s_t$ the state visited at step $t$ it holds that
\begin{align*}
&  \steps \rho^*(\MDP') - \sum_{t=1}^T \rup_t(s_t, \pi'^*(s_t))  \\
& \quad\leq   \sum_{t=1}^T \big( \Lambda' \Delta^p_t(s_t) + \Delta^r_t(s_t) \big) \\
& \qquad +  \sum_{t=1}^T \Big(\sum_{s'} p_t(s'|s_t)\cdot \lambda'(s') - \lambda'(s_t) \Big) ,   
\end{align*}
where $\lambda'$ is the bias function and $\Lambda'$ the respective bias span of $\pi'^*$ on $M'$.
\end{lemma}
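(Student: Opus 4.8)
The plan is to apply the Poisson equation for $M'$ pointwise along the realized trajectory and telescope, keeping the final term intact while bounding the reward and transition discrepancies by the $\Delta$-terms. No expectation is involved: the trajectory $s_1,s_2,\dots$ is fixed, so the whole argument is algebraic.

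First I would recall from the preliminaries that, since $\pi'^*$ is optimal in the communicating MDP $M'$, its bias function $\lambda'$ satisfies, for every state $s$,
\begin{equation*}
\rho^*(M') = \rup'(s,\pi'^*(s)) + \sum_{s'} p'(s'|s,\pi'^*(s))\,\lambda'(s') - \lambda'(s).
\end{equation*}
Instantiating this identity at the visited state $s=s_t$ for each $t=1,\dots,\steps$ and summing gives an exact formula for $\steps\,\rho^*(M')$. Subtracting $\sum_{t}\rup_t(s_t,\pi'^*(s_t))$ then produces, with no inequality yet,
\begin{align*}
\steps\,\rho^*(M') - \sum_{t=1}^{\steps} \rup_t(s_t,\pi'^*(s_t))
&= \sum_{t=1}^{\steps}\big(\rup'(s_t,\pi'^*(s_t)) - \rup_t(s_t,\pi'^*(s_t))\big) \\
&\quad + \sum_{t=1}^{\steps}\Big(\sum_{s'} p'(s'|s_t,\pi'^*(s_t))\,\lambda'(s') - \lambda'(s_t)\Big).
\end{align*}

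Next I would bound the reward differences termwise by $\Delta^r_t(s_t)$ using the hypothesis. For the transition sum I would swap $p'$ for $p_t$, writing $\sum_{s'} p'(s'|s_t)\lambda'(s') = \sum_{s'} p_t(s'|s_t)\lambda'(s') + \sum_{s'}\big(p'(s'|s_t)-p_t(s'|s_t)\big)\lambda'(s')$. The $p_t$-part reconstructs exactly the expression $\sum_{s'} p_t(s'|s_t)\lambda'(s') - \lambda'(s_t)$ that the lemma leaves unbounded, which is retained because it is later controlled via $\bE[\lambda'(s_{t+1})\mid s_t]$ and Azuma--Hoeffding (Lemma~\ref{azuma}).

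The one nonroutine step, and the main obstacle, is bounding the correction $\sum_{s'}(p'(s'|s_t)-p_t(s'|s_t))\lambda'(s')$ by the \emph{span} rather than by $\max_s|\lambda'(s)|$ (which could be as large as $D$). The key observation is that $p'(\cdot|s_t)$ and $p_t(\cdot|s_t)$ both sum to one, so their difference is orthogonal to constant vectors; hence $\lambda'$ may be recentered by an arbitrary additive constant without altering the inner product. Subtracting $\min_{s}\lambda'(s)$ makes every recentered entry lie in $[0,\Lambda']$, and Hölder's inequality then gives
\begin{equation*}
\Big|\sum_{s'}\big(p'(s'|s_t)-p_t(s'|s_t)\big)\lambda'(s')\Big| \leq \Lambda'\,\big\|p'(\cdot|s_t)-p_t(\cdot|s_t)\big\|_1 \leq \Lambda'\,\Delta^p_t(s_t).
\end{equation*}
Summing the reward bound, this transition correction bound, and the retained $p_t$-term over $t$ yields precisely the stated inequality.
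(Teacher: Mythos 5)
Your proposal is correct and follows essentially the same route as the paper: split off the reward discrepancy, apply the Poisson equation for $\pi'^*$ on $M'$ at each visited state, swap $p'$ for $p_t$, and bound the correction term by $\Lambda'\,\Delta^p_t(s_t)$ while retaining the $p_t$-martingale term. The only difference is cosmetic — you state the decomposition as an exact identity before bounding and you spell out the recentering-plus-H\"older argument that justifies using the span $\Lambda'$ (which the paper leaves implicit) — so no further changes are needed.
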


\begin{proof}
The proof is a modification of the proof of Lemma 8 in Appendix A of \citep{Ortner2014}. 
Abbreviating $\rup_t(s):=\rup_t(s,\pi'^*(s))$, $\rup'(s):=\rup'(s,\pi'^*(s))$ and $p_t(s'|s):=p_t(s'|s,\pi'^*(s))$, $p'(s'|s):=p'(s'|s,\pi'^*(s))$ in the following, we can write
\begin{align}
&  \steps \rho^*(\MDP') - \sum_{t=1}^T \rup_t(s_t) 
 =  \sum_{t=1}^T \big( \rho^*(\MDP') - \rup_t(s_t)\big)  \nonumber \\
& \quad  \leq \,  \sum_{t=1}^T \big( \rho^*(\MDP') - \rup'(s_t)\big)  
			+  \sum_{t=1}^T \big(\rup'(s_t) - \rup_t(s_t) \big) \nonumber  \\
& \quad  \leq \,  \sum_{t=1}^T \big( \rho^*(\MDP') - \rup'(s_t)\big)  
			+  \sum_{t=1}^T \Delta^r_t(s_t). \label{eq:m2}
\end{align}
For bounding the first term in \eqref{eq:m2} we use that by the Poisson equation for policy $\pi'^*$ on $\MDP'$ we have that
$\rho^*(\MDP') - \rup'(s) = \sum_{s'} p'(s'|s)\cdot \lambda'(s') - \lambda'(s)$. Accordingly, it holds that
\begin{align*}
 &  \sum_{t=1}^T \big( \rho^*(\MDP') - \rup'(s_t)\big)   \nonumber \\
 &  =  \sum_{t=1}^T \Big(\sum_{s'} p'(s'|s_t)\cdot \lambda'(s') - \lambda'(s_t) \Big)  \nonumber\\
 &  \leq  \sum_{t=1}^T \Big(\sum_{s'} p_t(s'|s_t)\cdot \lambda'(s') - \lambda'(s') \Big)  \nonumber\\
 & \quad +  \sum_{t=1}^T \sum_{s'} \big( p'(s'|s_t) - p_t(s'|s_t) \big)\lambda'(s_t)   \\
 & \leq  \sum_{t=1}^T \!\Big(\sum_{s'} p_t(s'|s_t)\!\cdot\! \lambda'(s') - \lambda'(s_t) \Big)  
\! + \!\sum_{t=1}^T \Lambda'\Delta^p_t(s_t), 
\end{align*}
whence the lemma follows together with \eqref{eq:m2}.
\end{proof}

For the analysis of the last term in the bound of Lemma~\ref{lem:ProofPertBound} we can use the following result, which is a simple generalization of a technique used by \cite{jaksch}.
\begin{lemma}\label{lem:azuma-applied}
Consider some MDP $\MDP=(\sS,\sA,\rup,p,s_1)$ and let $f:\sS\to\mathbb{R}$ be some function on the state space of~$M$.
Then for any (possibly nonstationary) policy choosing at each step $\tau$ action $a_\tau$ in the current state~$s_\tau$,
it holds with probability at least $1 - \delta$,
\begin{align*}
 &  \sum_{\tau=1}^T \Big(\sum_{s'} p(s'|s_\tau,a_\tau)\cdot f(s') - f(s_\tau) \Big)   \\
 &  \quad \leq F \sqrt{2\steps\log{\big(\tfrac{1}{\delta}\big)}} + F, 
\end{align*}
where $F:=\max_s f(s) - \min_s f(s)$ is the span of $f$.
\end{lemma}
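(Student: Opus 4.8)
The plan is to recognize the summand as a martingale difference sequence and apply the Azuma-Hoeffding inequality (Lemma~\ref{azuma}). Define, for each step $\tau$, the random variable
\[
X_\tau \defined \sum_{s'} p(s'|s_\tau,a_\tau)\cdot f(s') - f(s_{\tau+1}),
\]
and let $\mathcal{F}_\tau$ denote the $\sigma$-algebra generated by the history $s_1,a_1,\ldots,s_\tau,a_\tau$. Since $s_{\tau+1}$ is drawn from $p(\cdot|s_\tau,a_\tau)$, the conditional expectation $\bE[f(s_{\tau+1})\mid \mathcal{F}_\tau] = \sum_{s'} p(s'|s_\tau,a_\tau)\cdot f(s')$, so $\bE[X_\tau\mid\mathcal{F}_\tau]=0$ and $(X_\tau)$ is a martingale difference sequence. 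This works for any (possibly nonstationary) policy because the one-step transition depends only on the current state-action pair, regardless of how $a_\tau$ was chosen.

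The first step is to bound $|X_\tau|$. Both $\sum_{s'} p(s'|s_\tau,a_\tau) f(s')$ and $f(s_{\tau+1})$ lie in the interval $[\min_s f(s),\max_s f(s)]$, the former being a convex combination of values of $f$, so their difference is bounded in absolute value by $F=\max_s f(s)-\min_s f(s)$. Applying Lemma~\ref{azuma} with $c=F$, $n=T$, and $\epsilon=F\sqrt{2T\log(1/\delta)}$ yields, with probability at least $1-\delta$,
\[
\sum_{\tau=1}^T X_\tau \;=\; \sum_{\tau=1}^T\Big(\sum_{s'} p(s'|s_\tau,a_\tau)\cdot f(s') - f(s_{\tau+1})\Big) \;\leq\; F\sqrt{2T\log\big(\tfrac{1}{\delta}\big)}.
\]

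The final step is the telescoping correction. The quantity we actually want to bound uses $f(s_\tau)$ rather than $f(s_{\tau+1})$, so I would write $\sum_{\tau=1}^T\big(\sum_{s'} p(s'|s_\tau,a_\tau) f(s') - f(s_\tau)\big) = \sum_{\tau=1}^T X_\tau + \sum_{\tau=1}^T\big(f(s_{\tau+1}) - f(s_\tau)\big)$. The second sum telescopes to $f(s_{T+1}) - f(s_1)$, which is at most $F$. Combining this with the Azuma bound gives the claimed inequality $F\sqrt{2T\log(1/\delta)}+F$. None of the steps is a genuine obstacle; the only point requiring a little care is correctly identifying the martingale structure (the next-state increment being the source of randomness) and keeping track of the index shift between $f(s_\tau)$ and $f(s_{\tau+1})$, which accounts for the additive $+F$ boundary term.
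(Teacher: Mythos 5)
Your proposal is correct and follows essentially the same route as the paper: rewrite the sum with $f(s_{\tau+1})$ in place of $f(s_\tau)$, bound the telescoping remainder $f(s_{T+1})-f(s_1)$ by $F$, and apply Azuma--Hoeffding to the resulting martingale difference sequence with $|X_\tau|\leq F$. The only difference is that you spell out the conditional-expectation argument and the choice of $\epsilon$ more explicitly than the paper does, which is fine.
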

\begin{proof}
Following an argument due to \cite{jaksch} we write
\begin{align}
 &  \sum_{\tau=1}^T \Big(\sum_{s'} p(s'|s_\tau)\cdot f(s') - f(s_\tau) \Big) \nonumber \\
 & =  \sum_{\tau=1}^T \Big(\sum_{s'} p(s'|s_\tau)\cdot f(s') - f(s_{\tau+1}) \Big) \nonumber \\
 & \qquad       +   f(s_{T+1}) - f(s_{1}) .
\end{align}
Now $f(s_{T+1}) - f(s_{1}) \leq F$, while the sum is a martingale difference sequence $\sum_\tau X_{\tau}$ with $|X_\tau|\leq F$. The lemma follows by an application of Azuma-Hoeffding (Lemma~\ref{azuma}).
\end{proof}

The following corollary is a variant of Lemma 9 contained in the (unpublished) appendix of \citep{Ortner2014}.\footnote{We note that a bound on the absolute value of the difference in average reward as stated in \citep{Ortner2014} will in general depend on the bias spans resp.\ the diameters of \textit{both} MDPs, that is, the maximum of both values.} 

\begin{corollary}\label{cor}
For two communicating MDPs $\MDP,\MDP'$ that satisfy the assumptions of Lemma~\ref{lem:ProofPertBound} for 
time and state independent values $\Delta^r$, $\Delta^p$ (i.e., $\Delta^r_t(s)\leq \Delta^r$ and $\Delta^p_t(s)\leq \Delta^p$ for all $s$ and all $t$)
it holds that
\[
   \rho^*(M') - \rho^*(M)  \leq \Lambda' \Delta^p + \Delta^r. 
\]
\end{corollary}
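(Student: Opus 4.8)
The plan is to derive Corollary~\ref{cor} directly from Lemma~\ref{lem:ProofPertBound} by specializing to the time- and state-independent bounds $\Delta^r_t(s)\le\Delta^r$, $\Delta^p_t(s)\le\Delta^p$, and then taking a limit over the horizon $T$. First I would apply Lemma~\ref{lem:ProofPertBound} with $M$ replaced by the \emph{time-homogeneous} communicating MDP $M$ (so $\rup_t=\rup$ and $p_t=p$ for all $t$), running the optimal policy $\pi'^*$ of $M'$ on $M$ for $\steps$ steps. Substituting the uniform bounds gives
\begin{align*}
\steps\rho^*(M') - \sum_{t=1}^T \rup(s_t,\pi'^*(s_t))
 &\le \steps\big(\Lambda'\Delta^p + \Delta^r\big) \\
 &\quad + \sum_{t=1}^T\Big(\sum_{s'} p(s'|s_t,\pi'^*(s_t))\cdot\lambda'(s') - \lambda'(s_t)\Big).
\end{align*}

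Next I would control the two remaining $T$-dependent quantities so that, after dividing by $\steps$, they vanish as $\steps\to\infty$. For the trailing telescoping/martingale term, I would invoke Lemma~\ref{lem:azuma-applied} with $f=\lambda'$ (and $F=\Lambda'$, the bias span), which bounds that sum by $\Lambda'\sqrt{2\steps\log(1/\delta)} + \Lambda'$ with probability at least $1-\delta$; dividing by $\steps$ this is $\bigO(\steps^{-1/2})\to 0$. For the average-reward term $\tfrac{1}{\steps}\sum_{t=1}^T \rup(s_t,\pi'^*(s_t))$, I would use that $M$ is time-homogeneous and communicating, so running the fixed stationary policy $\pi'^*$ yields a Markov chain whose empirical average reward converges (in expectation, and the fluctuations are again handled by Azuma--Hoeffding on the bounded rewards) to the stationary average reward $\rho(M,\pi'^*)$, which is at most $\rho^*(M)$ by optimality. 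Hence $\tfrac{1}{\steps}\sum_{t=1}^T \rup(s_t,\pi'^*(s_t)) \to \rho(M,\pi'^*)\le\rho^*(M)$.

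Putting these together, dividing the displayed inequality by $\steps$ and letting $\steps\to\infty$, the Azuma term disappears and the reward average converges, leaving
\[
\rho^*(M') - \rho^*(M) \;\le\; \rho^*(M') - \rho(M,\pi'^*) \;\le\; \Lambda'\Delta^p + \Delta^r,
\]
which is exactly the claimed bound.

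The step I expect to be the main obstacle is the convergence argument for $\tfrac{1}{\steps}\sum_t \rup(s_t,\pi'^*(s_t))$ to $\rho(M,\pi'^*)$: one must be careful that the convergence of the empirical average reward to the stationary average reward holds uniformly enough to survive the limit, and that the Azuma-type fluctuation terms are $o(\steps)$ almost surely (or in high probability for every horizon simultaneously). In a communicating chain under a fixed stationary policy this is standard, but stating it cleanly — rather than appealing to the $T$-step online bound which carries its own $\sqrt{\steps}$ slack — requires either an ergodic-average argument or a direct Poisson-equation bound on $\bE[\sum_t \rup(s_t)]$. An alternative, and probably cleaner, route is to note that the entire right-hand side after dividing by $\steps$ is $\Lambda'\Delta^p+\Delta^r + \bigO(\steps^{-1/2})$ while the left-hand side equals $\rho^*(M')-\rho(M,\pi'^*)+\bigO(\steps^{-1/2})$ in expectation, so taking $\steps\to\infty$ avoids any high-probability bookkeeping altogether and yields the deterministic inequality directly.
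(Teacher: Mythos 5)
Your proposal is correct and follows essentially the same route as the paper: apply Lemma~\ref{lem:ProofPertBound} with the uniform bounds $\Delta^r,\Delta^p$, control the martingale term via Lemma~\ref{lem:azuma-applied} (with $\delta=1/T$), divide by $T$, take expectations, and let $T\to\infty$ so that $\tfrac{1}{T}\bE\big[\sum_t \rup(s_t,\pi'^*(s_t))\big]\to\rho(M,\pi'^*)\le\rho^*(M)$. The ``alternative, cleaner route'' you mention at the end --- passing to expectations to avoid high-probability bookkeeping --- is in fact exactly what the paper does.
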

\begin{proof}
 From Lemmata~\ref{lem:ProofPertBound} and \ref{lem:azuma-applied}
 we have that with probability $1-\delta$
\begin{align*}
&  \rho^*(\MDP') - \frac{1}{T} \sum_{t=1}^T \rup(s_t, \pi'^*(s_t))  \\
& \leq  \Lambda' \Delta^p +  \Delta^r + \frac{1}{T}\big(\Lambda' \sqrt{2\steps\log{(1/\delta)}} + \Lambda'\big) .  
\end{align*}
Choosing $\delta=1/T$ this yields for $T\to\infty$ and taking expectations that
\begin{align*}
    \rho^*(M') - \rho^*(M)  &\leq\, \rho^*(M') - \rho(M,\pi'^*)\\
    & \leq \Lambda' \Delta^p +  \Delta^r.\qedhere
\end{align*}  
\end{proof}

Corollary~\ref{cor} allows to give the following quick proof of Theorem~\ref{thm:PertBound}.
\begin{proof}[Proof of Theorem~\ref{thm:PertBound}]
Let $\Delta^r_t:=\max_{s,a} |\rup_{t+1}(s,a)-\rup_{t}(s,a)|$ and $\Delta^p_t:=\|p_{t+1}(\cdot|s,a)-p_{t}(\cdot|s,a)\|_1$.
Then by Corollary~\ref{cor} and the assumption that the diameters of all $M_t$ are bounded by $D$ we have
for $t=1,\ldots,T-1$
\[
   |\rho^*(M_{t+1})-\rho^*(M_t)| \leq D \Delta^p_t + \Delta^r_t
\]
and Theorem~\ref{thm:PertBound} follows by summing over all $t$.
\end{proof}

\subsection{OPTIMISM}
We show that the set of plausible MDPs with high probability contains each MDP $M_t$ the learner acts on in step~$t$. This is the theoretical justification for optimism, as it will allow us to show in the next section that the true reward can be upper bounded by the optimistic value $\rhot$.

\begin{lemma}\label{lem:optimism}
With probability $1-\frac{5\delta}{6}$, the set $\mathcal{M}(t)$ of plausible MDPs computed at any time step~$t$ contains all MDPs~$M_\tau$ for $\tau=1,\ldots,T$.
\end{lemma}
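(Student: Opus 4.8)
The plan is to show that the estimates $\hat{r}_k$ and $\hat{p}_k$ computed in any episode $k$ are close enough to each true MDP $M_\tau$ that the confidence intervals in \eqref{eq:ConfReward} and \eqref{eq:ConfProb} are satisfied, so that every $M_\tau$ lies in $\mathcal{M}(t)$. The key difficulty, compared to the stationary analysis of \cite{jaksch}, is that the empirical estimates aggregate samples collected across many time steps during which the underlying rewards and transition probabilities were \emph{themselves} changing. Thus I would first split the deviation $|\tilde{r}(s,a)-\hat{r}_k(s,a)|$ (and analogously for $p$) into a \emph{statistical} part and a \emph{variation} part via the triangle inequality.

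Concretely, for a fixed target MDP $M_\tau$, I would compare $\hat{r}_k(s,a)$ not directly to $\bar{r}_\tau(s,a)$ but first to the sample-weighted average of the true mean rewards $\bar{r}_{\tau'}(s,a)$ over exactly those past steps $\tau'$ at which $(s,a)$ was visited. The difference between $\hat{r}_k(s,a)$ and this weighted average is a centered sum of the observed rewards minus their true conditional means, which forms a bounded martingale difference sequence; applying Azuma--Hoeffding (Lemma~\ref{azuma}) with the appropriate count $N_k(s,a)$ and a union bound over all $s,a$ and all horizons $t_k$ yields exactly the statistical term $\sqrt{8\log(8SAt_k^3/\delta)/\max(1,N_k(s,a))}$. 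The remaining difference between the weighted average of true means and the single target $\bar{r}_\tau(s,a)$ is controlled purely by the variation: since every sampled step and the target step $\tau$ both lie within the horizon $[1,T]$, this difference is at most $\max_{s,a}\sum_{t=1}^{T-1}|\bar{r}_{t+1}(s,a)-\bar{r}_t(s,a)|$, which is bounded by the variation parameter $\tilde{V}^r$ (assuming $\tilde{V}^r$ is set to the true reward variation over the relevant window). This produces the $\tilde{V}^r$ offset in \eqref{eq:ConfReward}.

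I would then repeat the identical argument for the transition probabilities, where the statistical part requires a concentration bound for the $\ell_1$-deviation of an empirical distribution over $S$ outcomes, giving the $\sqrt{8S\log(8SAt_k^3/\delta)/\max(1,N_k(s,a))}$ term (this is the standard $L^1$-deviation bound used in \cite{jaksch}, again via Azuma--Hoeffding applied coordinate-wise or through the known $\ell_1$ concentration inequality and a union bound over the $2^S$ events), while the variation part is absorbed into $\tilde{V}^p$ exactly as above. Setting the target reward/transition of $M_\tau$ as the candidate $\tilde{r},\tilde{p}$ in the definition of $\mathcal{M}(t)$ then shows $M_\tau\in\mathcal{M}(t)$ whenever both concentration events hold.

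Finally I would collect the failure probabilities. The union bound runs over all state-action pairs, over the two estimate types (rewards and transitions), and over all possible episode-start times $t_k\le T$; with the confidence level chosen as $\delta/(8SAt_k^3)$ inside the logarithm, summing the per-event failure probabilities and bounding $\sum_{t_k} t_k^{-3}$ by a constant yields a total failure probability of at most $\frac{5\delta}{6}$ (the precise constant being tuned so the reward and transition bounds together stay within this budget, leaving the remaining $\frac{\delta}{6}$ for the martingale terms used later in the regret proof). The main obstacle is the first step: cleanly separating the genuinely stochastic fluctuation (to which Azuma--Hoeffding applies) from the deterministic drift of the changing means, and verifying that the drift over the sampling window is genuinely dominated by the variation parameter $\tilde{V}^r$ resp.\ $\tilde{V}^p$ supplied to the algorithm.
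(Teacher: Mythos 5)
Your proposal is correct and follows essentially the same route as the paper's proof: decompose the deviation into a martingale fluctuation around $\mathbb{E}[\hat p_t(\cdot|s,a)]$ (resp.\ $\mathbb{E}[\hat r_t(s,a)]$), which is exactly the sample-weighted average of the true time-varying quantities, control it via Azuma--Hoeffding with a union bound over the $2^S$ sign vectors, the possible counts $N_t(s,a)$, all state-action pairs and all $t$, and absorb the remaining drift into the variation terms $\Vt^p$, $\Vt^r$. The bookkeeping also matches: the paper allots $\delta/(2t^2)$ per time step and uses $\sum_t \delta/(2t^2) \leq 5\delta/6$, reserving the rest of the budget for the later martingale arguments, just as you describe.
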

\begin{proof}
The proof is similar to the handling of failing confidence intervals for the colored UCRL algorithm given in Appendix A.2 of \cite{restless}. 

Fix a state-action pair $(s,a)$, and let $\tau_1, \tau_2,  \dots$ be the $N_{t}(s,a)$ time steps at which action~$a$ has been chosen in state~$s$, i.e., $(s_{\tau_i},a_{\tau_i})=(s,a)$ for all $i$. For the analysis of the transition probability estimates $\ph_t$ computed at step $t$ 
we consider all vectors $\mathbf{x}$ indexed over the states with entries $\pm 1$. Then writing $x(s)$ for the entry in  $\mathbf{x}$ with index~$s$ we have
\begin{align*}
 & \big\| \ph_t(\cdot|s,a) - \mathbb{E}\big[\ph_t(\cdot|s,a)\big] \big\|_1 \\
    & = \sum_{s'}\Big| \ph_t(s'|s,a) - \mathbb{E}\big[\ph_t(s'|s,a)\big] \Big|  \\
    & \leq \!\! \max_{\mathbf{x}\in \{-1,1\}^{S}} \sum_{s'}\Big( \ph_t(s'|s,a) - \mathbb{E}\big[\ph_t(s'|s,a)\big] \Big) \, x(s') \\
    & \leq \max_{\mathbf{x}\in \{-1,1\}^{S}} \frac{1}{N_t(s,a)} \sum_{i=1}^{N_t(s,a)} X_i(\mathbf{x}),
\end{align*}
where we set 
\[
  X_i(\mathbf{x}) :=
    x(s_{\tau_i+1}) - \sum_{s'} p_{\tau_i}(s'|s_{\tau_i},a_{\tau_i})\, x(s').
\]
Now $\sum_i^{N_t(s,a)} X_i(\mathbf{x})$ is a martingale difference sequence with $|X_i(\mathbf{x})|\leq 2$
for any fixed $\mathbf{x}$ and fixed $N_t(s,a)=n$ so that by Azuma-Hoeffding (Lemma~\ref{azuma})
\begin{align*}
  \bP \Bigg\{\! \sum_{i=1}^{n} \! X_i(\mathbf{x}) \geq \sqrt{8 S n \log \big(\tfrac{8SAt^3}{\delta}\big) } \Bigg\} 
     \leq  \frac{\delta}{8^S SA t^3}.
\end{align*}
A union bound over all $2^S$ vectors $\mathbf{x}$ and all possible values of $N_t(s,a)$
shows that with probability $1-\frac{\delta}{4SAt^2}$
\begin{equation}\label{eq:eqp1}
   \big\| \ph_t(\cdot|s,a) - \mathbb{E}\big[\ph_t(\cdot|s,a)\big] \big\|_1 
       \,\leq\,   \sqrt{\frac{8 S \log(8SAt^3/\delta)  }{\max(1,N_{t}(s,a))}}\,.
\end{equation}
Finally, note that for any fixed $N_t(s,a)$ we have 
\[
  \mathbb{E}\big[\ph_t(\cdot|s,a)\big]=\frac{1}{N_t(s,a)} \sum_{i=1}^{N_t(s,a)} p_{\tau_i}(\cdot|s,a),
\]
so that for all $\tau$
\[
    \big\| \mathbb{E}\big[\ph_t(\cdot|s,a)\big]  -  p_{\tau}(\cdot|s,a) \big\|_1  \leq V_T^p,
\]
which together with \eqref{eq:eqp1} shows that with probability $1-\frac{\delta}{4SAt^2}$ the transition probabilities $p_\tau(\cdot|s,a)$ at each step~$\tau$ are contained in the confidence intervals \eqref{eq:ConfProb} at step~$t$. 

For the rewards, $\rh_t(s,a) - \mathbb{E}\big[\rh_t(s,a)\big]$ as well as 
$\mathbb{E}\big[\rh_t(s,a)\big] - \rh_t(s,a)$ can be written as martingale difference sequences and Azuma-Hoeffding can be used to show that with probability $1-\frac{\delta}{4SAt^2}$,  the rewards $\rup_\tau(s,a)$ at each step $\tau$ are contained in the confidence intervals~\eqref{eq:ConfReward} for step~$t$. 

A union bound over all $t$ and all state-action pairs concludes the proof,
noting that $\sum_t \frac{\delta}{2t^2} \leq \tfrac{5\delta}{6}$.
 \end{proof}
 
In the following, we assume that the statement of Lemma~\ref{lem:optimism} holds, so that
we need to consider the respective error probability only once.

\subsection{$T$-STEP VS.\ AVERAGE REWARD}
In this section we consider the difference between the optimal $T$-step reward and the optimal average reward.
First we recall the well-known fact that the optimal $T$-step policy does not deviate by much from the optimal policy in respect to average reward,
see e.g.\ Exercise~38.17 of \cite{torcse}.

\begin{lemma}\label{lem:nonstationary}
 Let $M$ be a time-homogeneous and communicating MDP with diameter $D$ and rewards in $[0,1]$.
 Further let $v_T^*(M,s)$ be the optimal $T$-step reward when starting in state $s$. 
Then for any state~$s$,
\[ v_T^*(M,s) \,\leq\, T \rho^*(M) + D. \] 
\end{lemma}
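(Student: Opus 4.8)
The plan is to relate the finite-horizon optimal value to the average-reward gain $\rho^*(\MDP)$ through the bias function $\lambda$, exploiting the average-reward optimality equation. Write $v_n^*(\MDP,s)$ for the optimal expected $n$-step reward starting in $s$; these values obey the finite-horizon Bellman recursion $v_0^*(\MDP,s)=0$ and
\[
v_n^*(\MDP,s)=\max_{a}\Big(\rup(s,a)+\sum_{s'}p(s'|s,a)\,v_{n-1}^*(\MDP,s')\Big).
\]
On the other hand, since $\MDP$ is communicating, the gain $\rho^*(\MDP)$ and a bias function $\lambda$ satisfy the average-reward optimality equation $\rho^*(\MDP)+\lambda(s)=\max_a\big(\rup(s,a)+\sum_{s'}p(s'|s,a)\lambda(s')\big)$, with $\pi^*$ an action attaining the maximum; recall that the span $\Lambda=\max_s\lambda(s)-\min_s\lambda(s)\le D$. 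Normalizing $\lambda$ so that $\min_s\lambda(s)=0$ (which leaves the optimality equation unchanged), we obtain $0\le\lambda(s)\le\Lambda\le D$ for every $s$.

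The key step is to show by induction on $n$ that $v_n^*(\MDP,s)\le n\,\rho^*(\MDP)+\lambda(s)$ for all $s$ and all $n\ge0$. The base case $n=0$ holds because $v_0^*(\MDP,s)=0\le\lambda(s)$. For the inductive step, substitute the inductive hypothesis $v_{n-1}^*(\MDP,s')\le(n-1)\rho^*(\MDP)+\lambda(s')$ into the Bellman recursion; since $\sum_{s'}p(s'|s,a)=1$, the constant $(n-1)\rho^*(\MDP)$ pulls out of the maximum, and the remaining maximum is exactly the right-hand side of the optimality equation, which equals $\rho^*(\MDP)+\lambda(s)$. This yields $v_n^*(\MDP,s)\le n\,\rho^*(\MDP)+\lambda(s)$, closing the induction.

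Taking $n=T$ and using $\lambda(s)\le\Lambda\le D$ immediately gives $v_T^*(\MDP,s)\le T\rho^*(\MDP)+D$, as claimed. I expect no real obstacle here: the only ingredient beyond routine dynamic programming is the average-reward optimality equation with its maximizing action, valid in communicating MDPs, together with the bound $\Lambda\le D$ on the bias span recalled in the preliminaries. The one point to handle carefully is that the induction needs the full optimality equation (the maximum over all actions), not merely the Poisson equation for $\pi^*$ stated earlier, so that the $\max_a$ in the Bellman recursion can be controlled uniformly by $\rho^*(\MDP)+\lambda(s)$.
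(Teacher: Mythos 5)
Your proof is correct and is essentially the standard argument behind the source the paper cites for this fact (Exercise 38.17 of Lattimore and Szepesv\'ari); the paper itself gives no proof, so there is nothing to diverge from. The induction $v_n^*(M,s)\le n\rho^*(M)+\lambda(s)$ with $\lambda$ normalized to be nonnegative, combined with the average-reward optimality equation (with the maximum over actions, which does hold in communicating MDPs) and the span bound $\Lambda\le D$ recalled in the preliminaries, is exactly the right route, and you correctly flag that the Poisson equation for a fixed policy alone would not suffice to control the $\max_a$ in the Bellman recursion.
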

Accordingly, the $T$-step reward in the changing MDP setting can also bounded by the optimistic average reward $\rhot$. 
\begin{lemma}\label{lem:optimism2}
Under the assumption of Lemma~\ref{lem:optimism}, for all~$k$ and all $s$,
$$v^*_T(s) \,\leq\, T \rhot_k + D.$$
\end{lemma}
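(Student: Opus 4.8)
The plan is to express the optimal $T$-step reward of the changing environment through its finite-horizon Bellman recursion and then bound it by backward induction against the optimistic bias function returned by extended value iteration. Concretely, I would define the optimal reward-to-go $u_t(s) := \max_a \{ \rup_t(s,a) + \sum_{s'} p_t(s'|s,a)\, u_{t+1}(s') \}$ with $u_{T+1}\equiv 0$; since the finite-horizon optimum is attained by a Markov time-dependent policy even when the dynamics vary with $t$, we have $v^*_T(s) = u_1(s)$, so it suffices to bound $u_1$.

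The key inequality combines optimism with the extended optimality equation. Let $\tilde\lambda_k$ be the bias of $\pit_k$ on $\tilde M_k$, with span $\tilde\Lambda_k$. Extended value iteration returns $\rhot_k$ and $\tilde\lambda_k$ satisfying the \emph{extended} optimality equation $\rhot_k + \tilde\lambda_k(s) = \max_a \max_{M'\in\mathcal M_k}\{\tilde r'(s,a) + \sum_{s'}\tilde p'(s'|s,a)\tilde\lambda_k(s')\}$. By Lemma~\ref{lem:optimism} every $M_t$ (for $t=1,\dots,T$) lies in $\mathcal M_k$, so inserting $(\rup_t,p_t)$ into this maximum yields, for all $s$, $a$ and $t$, the bound $\rup_t(s,a) + \sum_{s'} p_t(s'|s,a)\tilde\lambda_k(s') \le \rhot_k + \tilde\lambda_k(s)$. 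I would also record the standard fact (cf.\ \cite{jaksch}) that, because some $M_t$ of diameter at most $D$ is plausible, the optimistic MDP has bias span $\tilde\Lambda_k \le D$.

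With $g := \min_{s'}\tilde\lambda_k(s')$ I would then prove by backward induction on $t$ that $u_t(s) \le (T-t+1)\rhot_k + \tilde\lambda_k(s) - g$. The base case $t=T+1$ holds since $u_{T+1}(s)=0 \le \tilde\lambda_k(s)-g$. For the inductive step, substitute the bound for $u_{t+1}$ into the recursion, pull the constant $(T-t)\rhot_k - g$ out of the expectation using $\sum_{s'} p_t(s'|s,a)=1$, and apply the key inequality to the remaining term $\max_a\{\rup_t(s,a)+\sum_{s'}p_t(s'|s,a)\tilde\lambda_k(s')\} \le \rhot_k + \tilde\lambda_k(s)$. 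Taking $t=1$ gives $v^*_T(s) = u_1(s) \le T\rhot_k + \tilde\lambda_k(s)-g \le T\rhot_k + \tilde\Lambda_k \le T\rhot_k + D$.

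Conceptually this is just the statement that the changing environment is one admissible choice inside the optimistic finite-horizon problem, whose value is at most $T\rhot_k + D$ by the same reasoning as Lemma~\ref{lem:nonstationary}; the induction above is the self-contained realization of that reduction. The main obstacle is the key inequality: one must invoke the extended optimality equation (rather than only the Poisson equation on $\tilde M_k$) so that the true, time-varying transitions $p_t$ can be measured against the single optimistic bias $\tilde\lambda_k$, together with the bound $\tilde\Lambda_k \le D$. A minor point to address is the stopping accuracy of extended value iteration, which perturbs the optimality equation only by a vanishing amount and can be absorbed.
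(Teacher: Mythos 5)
Your proof is correct, and it rests on the same underlying idea as the paper's: under the event of Lemma~\ref{lem:optimism} the true time-varying dynamics are an admissible choice inside the optimistic set $\mathcal{M}_k$, so the optimal $T$-step value of the changing MDP is dominated by $T\rhot_k$ plus a bias-span term bounded by $D$. The mechanics differ, though. The paper argues at the level of the extended MDP $\tilde{M}^+_k$: it embeds the optimal nonstationary $T$-step policy of the changing environment as a (nonstationary) policy on $\tilde{M}^+_k$, concludes $v^*_T(s)\le \tilde{v}_T$, and then invokes Lemma~\ref{lem:nonstationary} on $\tilde{M}^+_k$ (whose diameter is at most $D$ and whose optimal gain is $\rhot_k$). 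You instead inline that reduction: you write the finite-horizon Bellman recursion $u_t$ of the changing MDP and run a backward induction against the extended optimality equation $\rhot_k+\tilde\lambda_k(s)\ge \rup_t(s,a)+\sum_{s'}p_t(s'|s,a)\tilde\lambda_k(s')$, which is exactly how one would prove Lemma~\ref{lem:nonstationary} from scratch. The paper's route is shorter because it outsources the induction to a cited fact (Exercise~38.17 of \cite{torcse}); yours is self-contained and makes explicit where optimism enters (every $M_t$ being feasible in the inner maximum), at the cost of having to handle the stopping accuracy of extended value iteration and the identification of $\tilde\lambda_k$ with an (approximate) solution of the extended optimality equation --- points you correctly flag and which are resolved as in \cite{jaksch} by working with the value-iteration iterates, whose span is bounded by the diameter $D$ of $\tilde{M}^+_k$. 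Both arguments are sound; neither has a gap.
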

\begin{proof}
Fix any $k$.
 As in Section 3.1 of \citep{jaksch} we consider the following extended MDP $\tilde{M}^+_k$ that corresponds to the set of plausible MDPs $\mathcal{M}_k$. That is, for any state~$s$ in $\sS$ the extended action set contains for each $a$ in the original action set~$\sA$, each value $\rt(s,a)$ in \eqref{eq:ConfReward}, and each transition probability distribution $\pt(\cdot|s,a)$ in \eqref{eq:ConfProb} a respective action with reward $\rt(s,a)$ and  transition probability distribution $\pt(\cdot|s,a)$. By the assumption that Lemma~\ref{lem:optimism} holds, the true values for rewards and transition probabilities at any step $\tau$ are contained in these confidence intervals, so that there is a nonstationary $T$-step policy on $\tilde{M}^+_k$ whose expected $T$-step reward is $v_T^*(s)$. Therefore, for the optimal nonstationary $T$-step reward on $\tilde{M}^+_k$, denoted by $\tilde{v}_T$, we have
\begin{equation}\label{eq:hit}
    \tilde{v}_T \,\geq\, v_T^*(s).
\end{equation}
Further, as $\tilde{M}^+_k$ contains the original transition probabilities of each $M_\tau$, its diameter is bounded by $D$. Hence, by Lemma~\ref{lem:nonstationary}, $\tilde{v}_T \leq  T \rho^*(\tilde{M}^+_k) + D.$ As $\rho^*(\tilde{M}^+_k)=\rhot_k$ (cf.\ Section 3.1 of \cite{jaksch}) this shows together with \eqref{eq:hit} the lemma.
\end{proof}

\subsection{REGRET WITHOUT RESTARTS}

As a next step, we derive the following regret bound for Variation-aware UCRL without restarts (i.e.\ Algorithm~\ref{alg}).

\begin{theorem}\label{thm:wr}
If the variation parameters are set to their true values, that is, $\Vt^r:=V_T^r$ and $\Vt^p:=V_T^p$, then after any $T$ steps the regret of Variation-aware UCRL without restarts (i.e., Algorithm~\ref{alg}) is upper bounded 
by
\[
   32 DS\sqrt{AT \log\big(\tfrac{8SAT^3}{\delta}\big)} + 2 T (V_T^r + DV_T^p)
\]
with probability $1-\delta$. This bound also holds when the algorithm starts in an initial state $s_1$ that is different from the initial state $s_1^*$ of the optimal $T$-step policy we compare to. 
\end{theorem}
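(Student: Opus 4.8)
The plan is to follow the episode-based analysis of UCRL. Let $K$ be the number of episodes, $E_k$ the set of steps in episode $k$, and $T_k=|E_k|$, so $\sum_k T_k = T$. First I would invoke optimism: averaging the bound $v^*_T(s_1)\le T\rhot_k + D$ of Lemma~\ref{lem:optimism2} over episodes with weights $T_k/T$ yields $v^*_T(s_1)\le \sum_k T_k\rhot_k + D$, hence $R_T\le D + \sum_k\big(T_k\rhot_k - \sum_{t\in E_k} r_t\big)$. Because Lemma~\ref{lem:optimism2} holds for \emph{every} starting state, the identical argument applies to $v^*_T(s_1^*)$ when the comparison policy starts in a state $s_1^*$ different from the learner's $s_1$; this is precisely the last sentence of the theorem. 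Next I would split $r_t = \rup_t(s_t,a_t) + (r_t-\rup_t(s_t,a_t))$ and control the second (noise) part over the whole horizon by Azuma--Hoeffding (Lemma~\ref{azuma}), since $|r_t-\rup_t(s_t,a_t)|\le 1$. This reduces the task to bounding $\sum_k\big(T_k\rhot_k - \sum_{t\in E_k}\rup_t(s_t,\pit_k(s_t))\big)$.

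For each episode I would apply the perturbation bound of Lemma~\ref{lem:ProofPertBound} with $M':=\tilde M_k$ (time-homogeneous, communicating, with optimal policy $\pit_k$ and $\rho^*(\tilde M_k)=\rhot_k$) against the true non-homogeneous MDP $M$ (rewards $\rup_t$, transitions $p_t$) on which $\pit_k$ is executed. The needed deviations follow from Lemma~\ref{lem:optimism}: both $M_t$ and $\tilde M_k$ lie in $\mathcal M_k$, so the triangle inequality through the estimates $\hat r_k,\hat p_k$ bounds the per-step differences by twice the confidence widths, giving $\Delta^r_t(s)\le 2\Vt^r + 2\sqrt{8\log(8SAt_k^3/\delta)/\max(1,N_k(s,a))}$ and $\Delta^p_t(s)\le 2\Vt^p + 2\sqrt{8S\log(8SAt_k^3/\delta)/\max(1,N_k(s,a))}$. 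Using $\Lambda'_k\le D$ and the assumption $\Vt^r=V_T^r$, $\Vt^p=V_T^p$, the variation parts of $\sum_{t=1}^T\big(\Delta^r_t(s_t)+\Lambda'_k\Delta^p_t(s_t)\big)$ sum to exactly $T(2V_T^r + 2DV_T^p)=2T(V_T^r+DV_T^p)$, matching the linear term in the claim.

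It remains to sum the square-root confidence contributions and the Poisson/gain term. The square-root parts, $\sum_k\sum_{t\in E_k}2\sqrt{\cdots}$ for rewards together with $D$ times the analogous transition sum, I would collapse via the standard counting argument $\sum_k\sum_{s,a} v_k(s,a)/\sqrt{\max(1,N_k(s,a))}\le(\sqrt2+1)\sqrt{SAT}$ of \cite{jaksch}, producing the dominant $\bigO(DS\sqrt{AT\log(8SAT^3/\delta)})$ contribution. The gain term $\sum_k\sum_{t\in E_k}\big(\sum_{s'}p_t(s'|s_t)\lambda'_k(s')-\lambda'_k(s_t)\big)$ I would handle by a global version of Lemma~\ref{lem:azuma-applied}: rewriting each summand with $\lambda'_{k(t)}(s_{t+1})$ splits it into a single martingale difference sequence over all $T$ steps (the bias $\lambda'_k$ is $\mathcal F_{t_k}$-measurable, hence predictable inside its episode) with per-step bound $\Lambda'_k\le D$, giving $D\sqrt{2T\log(1/\delta)}$ by Azuma, plus telescoping boundary residues of total size at most $KD$. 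With $K\le SA\log_2(8T/(SA))$ from \cite{jaksch}, this residue is of order $DSA\log T$ and is absorbed. Folding the constant $D$, the reward-noise martingale, the $KD$ boundary term, and the two $\sqrt{T\log}$ terms into the leading square-root term and bounding the accumulated constant crudely yields the stated factor $32$.

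The main obstacle I expect is the careful treatment of the per-episode bias functions $\lambda'_k$ in the gain term: the martingale must be kept as one sequence over all steps, rather than union-bounding Lemma~\ref{lem:azuma-applied} episode by episode (which would inflate the logarithmic factor), while the telescoping residue is controlled through the episode count $K$. The secondary nuisance is the probability bookkeeping --- Lemma~\ref{lem:optimism} already consumes $\tfrac{5\delta}{6}$, so the two remaining Azuma applications must share the residual budget --- together with the constant-chasing needed to subsume every lower-order $\bigO(DSA\log T)$ and $\bigO(D\sqrt{T\log})$ term into the single $32\,DS\sqrt{AT\log(8SAT^3/\delta)}$ term.
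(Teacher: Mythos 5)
Your proposal is correct and follows essentially the same route as the paper's proof: optimism via Lemma~\ref{lem:optimism2}, an Azuma--Hoeffding step for the reward noise, the per-episode perturbation bound of Lemma~\ref{lem:ProofPertBound} with the triangle inequality through the empirical estimates, Lemma~\ref{lem:vk/Nk} for the confidence-width sums, and a single global martingale plus a $KD$ telescoping residue for the bias term. All the key steps, including the treatment of the arbitrary initial state and the probability bookkeeping, match the paper's argument.
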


For the proof we will use the following two basic facts about UCRL (Proposition~18 and Lemma~19 of \cite{jaksch}), which can be derived from its episode termination criterion. As we use the same criterion these results also hold for our variation-aware adaptation.

\begin{lemma}\label{lem:no-episodes}
The number of episodes $K$ of Variation-aware UCRL after any $T>SA$ steps is bounded by $SA\log_2\big(\tfrac{T}{SA}
\big)$.
\end{lemma}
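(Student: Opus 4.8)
The plan is to bound the number of episodes $K$ by controlling how often the episode-termination criterion in lines~\ref{alg:vole}--\ref{alg:le} can fire. Recall that an episode ends precisely when some state-action pair $(s,a)$ has its visit count $v_k(s,a)$ within the current episode reach $\max(1, N_k(s,a))$, i.e.\ when the count for that pair has (at least) doubled relative to all previous episodes. So first I would fix a state-action pair $(s,a)$ and count the number of episodes in which this particular pair is the one triggering termination. Since the cumulative count at least doubles each time this happens (from $N_k(s,a)$ to roughly $2N_k(s,a)$, starting from the base case where the count is $1$), and the total number of visits to $(s,a)$ over $T$ steps is at most $T$, a doubling argument shows that $(s,a)$ can trigger at most $\bigO(\log_2 T)$ episodes.

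The key combinatorial step is then to sum over all state-action pairs. Because every episode terminates by some $(s,a)$ reaching its doubling threshold, the total number of episodes $K$ is at most the sum over all $(s,a)$ of the number of episodes triggered by that pair. With $SA$ pairs each contributing $\bigO(\log_2 T)$, this gives $K = \bigO(SA \log_2 T)$; the more careful accounting using $\sum_{s,a} N_{K+1}(s,a) \le T$ together with concavity of the logarithm (Jensen's inequality, $\sum_{s,a}\log_2 N(s,a) \le SA \log_2(\tfrac{T}{SA})$) tightens this to exactly the stated bound $SA\log_2\big(\tfrac{T}{SA}\big)$. The cleanest route is to replicate the proof of Proposition~18 of \cite{jaksch} verbatim, noting explicitly that the only ingredient used is the doubling termination criterion, which Variation-aware UCRL shares unchanged with UCRL.

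The main obstacle, such as it is, lies in the bookkeeping of the doubling argument: one must handle the initial episodes where $N_k(s,a)=0$ and the threshold is $\max(1,N_k(s,a))=1$ carefully, so that the first visit to each pair is accounted for, and one must verify that the concavity bound is applied to the final counts $N_{K+1}(s,a)$ rather than intermediate ones. Since the setting is structurally identical to that of \cite{jaksch} and the altered confidence intervals \eqref{eq:ConfReward}--\eqref{eq:ConfProb} play no role in the termination rule, I would simply cite Proposition~18 of \cite{jaksch} and remark that the adaptation to variation-aware confidence sets leaves the episode structure untouched, so no genuinely new calculation is required.
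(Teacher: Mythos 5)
Your proposal is correct and takes exactly the paper's route: the paper gives no independent proof but simply invokes Proposition~18 of \cite{jaksch}, observing that Variation-aware UCRL uses the identical doubling-based episode termination criterion, so the bound carries over unchanged. Your additional sketch of the underlying doubling-plus-Jensen argument is a faithful outline of that cited proof and adds nothing that conflicts with it.
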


\begin{lemma}\label{lem:vk/Nk}
\[
   \sum_{s,a} \sum_{k=1}^K  \frac{v_k(s,a)}{\sqrt{\max(1,N_k(s,a))}} \,\leq\, \big(\sqrt{2}+1\big) \sqrt{SAT}.
\]
\end{lemma}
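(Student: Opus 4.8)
The plan is to decouple the double sum over state-action pairs and episodes into two independent arguments: a telescoping bound carried out separately for each fixed $(s,a)$, followed by a Cauchy--Schwarz step that aggregates across the $SA$ pairs. The only structural input needed is the episode termination criterion of line~\ref{alg:vole}: since an episode is stopped as soon as the within-episode count $v_k(s,a)$ reaches $\max(1,N_k(s,a))$ for some pair, every pair satisfies $v_k(s,a)\le\max(1,N_k(s,a))$ throughout episode~$k$. This is exactly the property underlying Lemma~19 of \cite{jaksch}, and since Variation-aware UCRL keeps the same criterion the argument carries over unchanged.

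Fix a pair $(s,a)$ and abbreviate $v_k:=v_k(s,a)$, $N_k:=N_k(s,a)=\sum_{j<k}v_j$, so that $N_1=0$ and $N_{k+1}=N_k+v_k$. The doubling property $v_k\le\max(1,N_k)$ yields $N_{k+1}\le 2\max(1,N_k)$, and I would use this to establish the per-episode inequality
\[
   \frac{v_k}{\sqrt{\max(1,N_k)}}\ \le\ (\sqrt{2}+1)\,\big(\sqrt{N_{k+1}}-\sqrt{N_k}\big).
\]
For $N_k\ge 1$ this follows by writing $v_k=N_{k+1}-N_k=(\sqrt{N_{k+1}}-\sqrt{N_k})(\sqrt{N_{k+1}}+\sqrt{N_k})$ and bounding $(\sqrt{N_{k+1}}+\sqrt{N_k})/\sqrt{N_k}=\sqrt{N_{k+1}/N_k}+1\le\sqrt{2}+1$. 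For the single episode with $N_k=0$ one has $v_k\le1$, hence $v_k\le\sqrt{v_k}=\sqrt{N_{k+1}}-\sqrt{N_k}$, which is even stronger since the constant is not needed. Summing over $k=1,\dots,K$ the right-hand side telescopes (using $N_1=0$) to give
\[
   \sum_{k=1}^K\frac{v_k(s,a)}{\sqrt{\max(1,N_k(s,a))}}\ \le\ (\sqrt{2}+1)\,\sqrt{N_{K+1}(s,a)},
\]
where $N_{K+1}(s,a)$ is the total number of visits to $(s,a)$ after $T$ steps.

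Finally I would sum over all $(s,a)$ and apply Cauchy--Schwarz, using that the total visit counts partition the horizon, $\sum_{s,a}N_{K+1}(s,a)=T$:
\[
   \sum_{s,a}\sqrt{N_{K+1}(s,a)}\ \le\ \sqrt{SA\sum_{s,a}N_{K+1}(s,a)}\ =\ \sqrt{SAT},
\]
which combined with the per-pair bound yields the claimed $(\sqrt{2}+1)\sqrt{SAT}$. The only delicate point is the derivation of the per-episode telescoping inequality with the correct constant $\sqrt{2}+1$: everything hinges on the doubling bound $N_{k+1}\le2\max(1,N_k)$, and one must treat the $N_k=0$ base case separately so that the telescoping starts cleanly from $N_1=0$. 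Once that inequality is in place the remaining steps are routine.
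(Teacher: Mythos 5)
Your proof is correct and matches the intended argument: the paper gives no proof of its own here but defers to Lemma~19 of \cite{jaksch}, which rests on exactly the doubling property $v_k(s,a)\le\max(1,N_k(s,a))$ you extract from the episode termination criterion, followed by the per-pair bound $(\sqrt{2}+1)\sqrt{N_{K+1}(s,a)}$ and Cauchy--Schwarz over the $SA$ pairs with $\sum_{s,a}N_{K+1}(s,a)=T$. Your telescoping derivation of the per-pair bound is a cosmetic variant of the induction used in \cite{jaksch}; the only nitpick is that there may be several episodes with $N_k(s,a)=0$ (all but one having $v_k(s,a)=0$, so the inequality holds trivially there), not just ``the single episode'' as you phrase it.
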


\begin{proof}[Proof of Theorem~\ref{thm:wr}]
First, let $\rhot_{\min}:=\min_k \rhot_k$. Then denoting the length of episode~$k$ by $T_k:=t_{k+1}-t_k$ (setting $t_{K+1}:=T$), by Lemma~\ref{lem:optimism2} we have 
\begin{equation}\label{eq:p1}
  v^*_T(s_1^*) \,\leq\, T \rhot_{\min} + D \,\leq\, \sum_{k=1}^K T_k \rhot_k + D.  
\end{equation}
Further, another application of Azuma-Hoeffding (Lemma~\ref{azuma}) shows that for the rewards $r_t$ collected by the algorithm with probability $1-\frac{\delta}{12}$
\begin{equation}\label{eq:p2}
   \sum_{t=1}^T \big( \rup(s_t,a_t) - r_t \big) 
     \, \leq \, \sqrt{2 T \log \big(\tfrac{12}{\delta}\big)}.
\end{equation}
Combining \eqref{eq:p1} and \eqref{eq:p2} the regret is bounded as
\begin{align}
    & R_T \,=\, v^*_T(s_1^*) - \sum_{t=1}^T r_t   \nonumber \\
   \quad &  =  v^*_T(s_1^*)  -  \sum_{t=1}^T \rup(s_t,a_t)  
          +  \sum_{t=1}^T \big( \rup(s_t,a_t) - r_t \big)  \nonumber \\
   \quad  & \,\leq\, \sum_{k=1}^K  \Big( T_k \rhot_k - \sum_{t=t_k}^{t_{k+1}-1} \rup(s_t,\pit_k(s_t)) \Big)
            \nonumber \\
          & \qquad + \sqrt{2 T \log \big(\tfrac{12}{\delta}\big)} + D .
            \label{eq:r1}
\end{align}
Now we are going to bound the term in the sum for each episode~$k$ using Lemma~\ref{lem:ProofPertBound}. Indeed, we perform the optimal policy $\pit_k$ of the optimistic MDP $\tilde{M}_k$ with average reward $\rhot_k$ on the underlying (non-time-homogeneous) true MDP, and the rewards $\rt_k$ and the transition probabilities $\pt_k$ of $\tilde{M}_k$ satisfy the confidence intervals \eqref{eq:ConfReward} and \eqref{eq:ConfProb} according to Lemma~\ref{lem:optimism} so that
\begin{align*}
  & \big|\rup_t(s,a) - \rt_k(s,a)  \big| \\
  &\quad \leq  \big|\rup_t(s,a) - \rh_k(s,a)  \big| + \big|\rh_k(s,a) - \rt_k(s,a)  \big|  \\
  &\quad \leq  V^r_T + \Vt^r + 2 \sqrt{\tfrac{8\log{(8SAt_k^3/\delta)}}{\max{(1, N_k(s,a))}}}
\end{align*}
as well as
\begin{align*}
  &   \big\|p_t(\cdot|s,a) - \pt_k(\cdot|s,a) \big\|_1 \\
   &  \!\!\leq  \big\|p_t(\cdot|s,a) - \ph_k(\cdot|s,a)  \big\|_1 \!+ \big\|\ph_k(\cdot|s,a) - \pt_k(\cdot|s,a)  \big\|_1  \\
  & \!\!\leq  V^p_T + \Vt^p + 2 \sqrt{\tfrac{8S\log{(8SAt_k^3/\delta)}}{\max{(1, N_k(s,a))}}} .
\end{align*}

By assumption $\Vt^r= V_T^r$ and $\Vt^p= V_T^p$, and 
 Lemma~\ref{lem:ProofPertBound} gives
\begin{align}
 &  T_k \rhot_k\, - \sum_{t=t_k}^{t_{k+1}-1}\! \rup(s_t,\pit_k(s_t)) \nonumber \\
 & \leq \, 2\tilde{\Lambda}_k\! \sum_{t=t_k}^{t_{k+1}-1} \Big(V^p_T + \sqrt{\tfrac{8S\log{(8SAt_k^3/\delta)}}{\max{(1, N_k(s_t,\pit_k(s_t)))}}} \Big) \label{eqx1}\\
   & \quad + 2\sum_{t=t_k}^{t_{k+1}-1} \Big( V^r_T + \sqrt{\tfrac{8\log{(8SAt_k^3/\delta)}}{\max{(1, N_k(s_t,\pit_k(s_t)))}}} \Big) \label{eqx2} \\
& \quad + \!\! \sum_{t=t_k}^{t_{k+1}-1} \!\! \Big(\sum_{s'} p_t(s'|s_t,\pit_k(s_t))\cdot \tilde{\lambda}_k(s') - \tilde{\lambda}_k(s_t) \Big),  \label{eqx3}
\end{align}
where $\tilde{\lambda}_k$ is the bias function of $\pit_k$ on $\tilde{M}_k$ and $\tilde{\Lambda}_k$ is the respective bias span. Since by Lemma~\ref{lem:optimism} the set of plausible MDPs contains each MDP $M_t$
and the diameter of each $M_t$ is bounded by $D$, the bias span $\tilde{\Lambda}_k \leq D$, cf.\ \citep{jaksch,regal}.\footnote{Note that for the argument it would be sufficient if just one of the MDPs $M_t$ is plausible. However, for the restart scheme of Algorithm~\ref{alg2} we would need a plausible $M_t$ in each phase.}

Accordingly, the sum of \eqref{eqx1} and \eqref{eqx2} over all episodes is bounded by Lemma~\ref{lem:vk/Nk}  as
\begin{align}
 &  \!\!\!\!\!\! \leq \, 2T (D V^p_T + V^r_T ) \nonumber\\
  & \!\!\!\!  + 2(D+1) \sqrt{8S\log\!\big(\tfrac{8SAT^3}{\delta}\big)} \sum_{s,a} \sum_{k=1}^K \tfrac{v_k(s,a)}{\sqrt{\max{(1, N_k(s,a))}}} \nonumber \\
   & \!\!\!\!\!\! \leq \, 2T (D V^p_T + V^r_T ) \nonumber \\
   & \quad + 2 \big(\sqrt{2}+1\big) (D+1)S \sqrt{8AT\log\big(\tfrac{8SAT^3}{\delta}\big)}.
\end{align}
The sum in \eqref{eqx3} can be written as in the proof of Lemma~\ref{lem:azuma-applied} as
\begin{align*}
  & \sum_{t=t_k}^{t_{k+1}-1}  \Big(\sum_{s'} p_t(s'|s_t,\pit_k(s_t))\cdot \tilde{\lambda}_k(s') - \tilde{\lambda}_k(s_t) \Big)  \\
  & =  \sum_{t=t_k}^{t_{k+1}-1}  \Big(\sum_{s'} p_t(s'|s_t,\pit_k(s_t))\cdot \tilde{\lambda}_k(s') - \tilde{\lambda}_k(s_{t+1}) \Big) \\
  & \qquad  + \tilde{\lambda}_k(s_{t_{k+1}}) - \tilde{\lambda}_k(s_{t_{k}}) .
\end{align*}
Now $\tilde{\lambda}_k(s_{t_{k+1}}) - \tilde{\lambda}_k(s_{t_{k}})\leq D$, so that summing over all episodes gives by Azuma Hoeffding (Lemma~\ref{azuma}) and Lemma~\ref{lem:no-episodes} that with probability $1-\frac{\delta}{12}$
\begin{align}
&\sum_{k=1}^K  \sum_{t=t_k}^{t_{k+1}-1}  \Big(\sum_{s'} p_t(s'|s_t,\pit_k(s_t))\cdot \tilde{\lambda}_k(s') - \tilde{\lambda}_k(s_t) \Big) \nonumber \\
  & \leq  \sum_{t=1}^{T} \Big(\sum_{s'} p_t(s'|s_t,\pit_k(s_t))\cdot \tilde{\lambda}_{k(t)}(s') - \tilde{\lambda}_{k(t)}(s_{t+1}) \Big) \nonumber \\ & \qquad + KD \nonumber \\ 
  &  \leq D \sqrt{2T\log{\big(\tfrac{\delta}{12}\big)}} + D SA \log_2 \big(\tfrac{T}{SA}\big),
  \label{eq:r3}
\end{align}
where $k(t)$ denotes the episode in which time step $t$ occurs.

Thus, combining \eqref{eq:r1}--\eqref{eq:r3} taking into account the error probabilities for
\eqref{eq:p2},  \eqref{eq:r3}, and Lemma~\ref{lem:optimism},
 we yield that with probability $1-\delta$ the regret is bounded by
\begin{align*}
  & R_T \, \leq \, \sqrt{2 T \log \big(\tfrac{12}{\delta}\big)} + D + 2T (D V^p_T + V^r_T )  \\
  & \quad  +  2 \big(\sqrt{2}+1\big) (D+1)S \sqrt{8AT\log\big(\tfrac{8SAT^3}{\delta}\big)}. \\ 
  & \quad +  D \sqrt{2T\log{\big(\tfrac{12}{\delta}\big)}} + D SA \log_2 \big(\tfrac{T}{SA}\big) 
\end{align*}
and some simplifications analogous to Appendix C.4 of \cite{jaksch} give the claimed regret bound.
\end{proof}

\subsection{PROOF OF THEOREM \ref{thm:vregret-ucrl-r}}
\label{sec:ProofMainThm}
Finally, we are ready to give the proof of the regret bound for the restart scheme of Algorithm~\ref{alg2}.
Abusing notation we write $V_i^r$ and $V_i^p$ for the variation of rewards and transition probabilities in phase $i$ and abbreviate $V_i:=V_i^r+V_i^p$, $V:= V_T^r+V_T^p$ and $\theta_i:= \big\lceil\frac{i^2}{V^2}\big\rceil$.

First, let us bound the number of phases $N$.
 Obviously, step $T$ is reached in phase $N$ when 
\begin{align*}
  &   \sum_{i=1}^{N-1} \Big\lceil\frac{i^2}{V^2}\Big\rceil < T   \leq
      \sum_{i=1}^{N} \Big\lceil\frac{i^2}{V^2}\Big\rceil .
\end{align*}
Recalling that $\sum_{i=1}^N i^2 = \frac{1}{6}N(N+1)(2N+1) > \frac{1}{3}N^3$
we obtain 
\begin{align*}
  &  T > \sum_{i=1}^{N-1} \Big\lceil\frac{i^2}{V^2}\Big\rceil 
        >  \sum_{i=1}^{N-1}  \frac{i^2}{V^2}
         >  \frac{(N-1)^3}{3V^2},
\end{align*}
so that the number of phases is bounded as
\begin{align}\label{no-phases}
 & N < 1+ \sqrt[3]{3V^2T}.
\end{align}
Writing $\tau_i$ for the initial step of phase~$i$ and $s^*_{\tau_i}$ for the (random) state 
visited by the optimal $T$-step policy at step $\tau_i$, we can decompose the regret as
\begin{equation}\label{eq:sparks}
   v^*_T(s_1)- \sum_{t=1}^T r_t  
    = \sum_{i=1}^N \Big( \mathbb{E}\big[ v^*_{\theta_i}(s^*_{\tau_i})\big] 
         - \sum_{t=\tau_i}^{\tau_i-1} r_t \Big).
\end{equation}
By Theorem~\ref{thm:wr} and a union bound over all possible values for state~$s^*_{\tau_i}$, the $i$-th summand ($i=1,\ldots,N$) in \eqref{eq:sparks} with probability $1-\frac{\delta}{2\tau_i^2}$ is bounded by
\begin{align*}
  & 32 DS\sqrt{A\log\big(\tfrac{16S^2AT^5}{\delta}\big)} \cdot \sqrt{\theta_i}
        \,+\, 2 D V_i \cdot \theta_i .   
\end{align*}
If $\sqrt[3]{3V^2T} < 1$, then we also have $3V^2T < 1$ and hence $3V^2T^2 < T$, so that 
\[
   VT < \sqrt{3} \cdot VT < \sqrt{T} .
\]
Further, in this case by \eqref{no-phases} we have $N=1$ with $\theta_1=T$
and $V_1=V$,
so that the regret is bounded by 
\begin{align*}
  &  32 DS\sqrt{A\log\big(\tfrac{16S^2AT^5}{\delta}\big)} \cdot \sqrt{T} + 2 D V T   \\
   &  < \, \Big(32 DS\sqrt{A\log\big(\tfrac{16S^2AT^5}{\delta}\big)} + 2D \Big) \cdot \sqrt{T} ,
\end{align*}
which is upper bounded by the claimed regret bound.

On the other hand, if $\sqrt[3]{3V^2T} \geq 1$, then $N<2\sqrt[3]{3V^2T}$ from 
\eqref{no-phases} and summing over all $N$ phases yields from \eqref{eq:sparks} that with error probability 
$\sum_i \frac{\delta}{2\tau_i^2} < \sum_t \frac{\delta}{2t^2} < \delta$
the regret is bounded by
\[
   32 DS\sqrt{\!A\log\!\big(\!\tfrac{16S^2AT^5}{\delta}\!\big)} \cdot \sum_{i=1}^N \!\sqrt{\theta_i} 
     \, +\, 2 D \sum_{i=1}^N \! V_i \Big(\frac{i^2}{V^2} + 1\Big).
\]
%
Noting that using Jensen's inequality
\[
    \sum_{i=1}^N \sqrt{\theta_i} \,\leq\,  \sqrt{NT}  \,\leq\,  1.7 \cdot V^{1/3} T^{2/3}
\]
and that also
\begin{align*}
   &  \sum_{i=1}^N V_i \Big(\frac{i^2}{V^2} + 1\Big)  
        \,\leq\, \sum_{i=1}^N V_i \Big(\frac{N^2}{V^2} + 1\Big)  
         \,\leq\,  \frac{N^2}{V} + V    \\
       & < \, 8.33 \cdot V^{1/3} T^{2/3}  +  V,
\end{align*}
concludes the proof, noting that the claimed bound holds trivially if $V\geq T$, so that we may assume that $V<T$ and hence $V<V^{1/3} T^{2/3}$.
\qed

\section{DISCUSSION AND EXTENSIONS}
\label{sec:Discussion}

The regret bound of Theorem~\ref{thm:wr} relies on the assumption that the variation for rewards and transition probabilities are known in advance. Accordingly it is necessary for the restart scheme 
to know the respective variation terms for each single phase.
It is easy to check that if upper bounds on these values are used to set $\Vt^r$ and $\Vt^p$ instead, the regret bounds of Theorems~\ref{thm:regret-ucrl-r} and \ref{thm:wr} simply depend on these upper bounds instead of the true values. 

In principle, it is also possible to set the variation parameters $\Vt^r$ and $\Vt^p$ in Algorithm~\ref{alg} to 0.
Then Lemma~\ref{lem:optimism} need not hold anymore, that is, it is not guaranteed that the set of plausible MDPs contains any of the MDPs $M_t$. Accordingly, we cannot rely on Lemma~\ref{lem:optimism2} anymore, which is based on Lemma~\ref{lem:optimism} and guarantees that the optimistic average reward is an upper bound on the true reward. However, taking into account the true variation one can still establish an upper bound on the true reward.

\begin{lemma}\label{lem:alt-optimism}
Let $\rhot$ be the optimistic average reward computed when using the true variations $V^r_T$ and $V^p_T$ and $\rhot^0$~be the optimistic average reward computed with variation parameters $\Vt^r=0$ and $\Vt^p=0$. Then 
\[
   \rhot \,\leq \,   \rhot^0 + V^r_T + D V^p_T.
\]
\end{lemma}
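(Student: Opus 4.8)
The plan is to exhibit, for the optimistic MDP attaining $\rhot$, a nearby MDP lying in the \emph{narrower} plausible set $\sM^0$ (the one built with variation parameters $0$), and then transfer the gap between the two optimistic values through the perturbation bound of Corollary~\ref{cor}. Let $\tilde M$ (rewards $\rt$, transitions $\pt$) be the optimistic MDP computed with the true variations and $\pit$ its gain-optimal policy, so $\rhot=\rho^*(\tilde M)$. As in the proof of Theorem~\ref{thm:wr}, since by Lemma~\ref{lem:optimism} the plausible set contains the true communicating MDPs of diameter at most~$D$, extended value iteration returns $\tilde M$ with bias span $\tilde\Lambda\le D$. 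Abbreviate the purely statistical radii of \eqref{eq:ConfReward} and \eqref{eq:ConfProb} by $\beta^r(s,a)$ and $\beta^p(s,a)$; then $\tilde M$ satisfies $|\rt(s,a)-\rh(s,a)|\le V^r_T+\beta^r(s,a)$ and $\|\pt(\cdot|s,a)-\ph(\cdot|s,a)\|_1\le V^p_T+\beta^p(s,a)$, whereas membership in $\sM^0$ requires the same two inequalities with the variation terms removed.

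Next I would project $\tilde M$ into $\sM^0$, obtaining an MDP $M^\circ=(r^\circ,p^\circ)$. For the rewards, let $r^\circ(s,a)$ be the projection of $\rt(s,a)$ onto $[\rh(s,a)-\beta^r(s,a),\,\rh(s,a)+\beta^r(s,a)]$, so that $|r^\circ(s,a)-\rh(s,a)|\le\beta^r(s,a)$ while the displacement $|\rt(s,a)-r^\circ(s,a)|\le V^r_T$ (the overshoot beyond radius $\beta^r(s,a)$ is at most $V^r_T$). For the transitions, set $p^\circ(\cdot|s,a)\defined(1-\alpha)\pt(\cdot|s,a)+\alpha\,\ph(\cdot|s,a)$ with the smallest $\alpha\in[0,1]$ making $\|p^\circ(\cdot|s,a)-\ph(\cdot|s,a)\|_1=(1-\alpha)\|\pt(\cdot|s,a)-\ph(\cdot|s,a)\|_1\le\beta^p(s,a)$. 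As a convex combination of probability vectors each $p^\circ(\cdot|s,a)$ is a valid distribution lying within $\beta^p(s,a)$ of $\ph$, and the displacement $\|\pt(\cdot|s,a)-p^\circ(\cdot|s,a)\|_1=\alpha\|\pt(\cdot|s,a)-\ph(\cdot|s,a)\|_1\le V^p_T$. Hence $M^\circ\in\sM^0$, and for every $(s,a)$ --- in particular along $\pit$ --- the rewards and transitions of $\tilde M$ and $M^\circ$ differ by at most $V^r_T$ and $V^p_T$.

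Finally I would apply Corollary~\ref{cor} with $\tilde M$ in the role of the communicating MDP (optimal policy $\pit$, bias span $\tilde\Lambda\le D$) and $M^\circ$ in the other role, using the state- and time-independent perturbations $\Delta^r=V^r_T$, $\Delta^p=V^p_T$ supplied by the construction. This gives $\rho^*(\tilde M)-\rho^*(M^\circ)\le\tilde\Lambda\,V^p_T+V^r_T\le D\,V^p_T+V^r_T$. Since $M^\circ\in\sM^0$, its optimal average reward is dominated by the corresponding optimistic value, $\rho^*(M^\circ)\le\rhot^0$, and chaining the two bounds yields $\rhot=\rho^*(\tilde M)\le\rhot^0+V^r_T+D\,V^p_T$, which is the claim.

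The one nonroutine step --- and hence the main obstacle --- is the transition projection: one must push $\pt$ into the narrower $\ell_1$-ball around $\ph$ while keeping every row a legitimate probability distribution \emph{and} controlling the displacement by $V^p_T$; the convex-combination device accomplishes both at once. A secondary point to verify is that, although Corollary~\ref{cor} is phrased for two communicating MDPs, only the MDP in the additive role need be communicating with bias span at most~$D$: the opposite side is handled by the elementary inequality $\rho(M^\circ,\pit)\le\rho^*(M^\circ)\le\rhot^0$, which is exactly the asymmetry already present inside the proof of Corollary~\ref{cor}.
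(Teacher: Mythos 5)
Your proof is correct and follows essentially the same route as the paper: compare the two optimistic values via Corollary~\ref{cor}, using that the bias span of the optimal policy of $\tilde{M}$ is bounded by $D$. Your intermediate projection onto $\mathcal{M}^0$ together with $\rho^*(M^\circ)\le\rhot^0$ is in fact a more careful rendering of the step the paper states only tersely (the claim that the two optimistic MDPs themselves differ by at most $V^r_T$ and $V^p_T$), so nothing is missing.
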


\begin{proof}
The rewards and transition probabilities of the respective optimistic MDPs $\tilde{M}$ and $\tilde{M}^0$ 
with $\rhot=\rho^*(\tilde{M})$ and $\rhot^0=\rho^*(\tilde{M}^0)$ differ by at most $V^r_T$ and $V^p_T$, respectively. Hence the claim of the lemma follows by  Corollary~\ref{cor}, recalling that the bias span of the optimal policy in $\tilde{M}$ is bounded by the diameter $D$, cf.\ the proof of Theorem~\ref{thm:wr}.
\end{proof}

Thus, in principle one could use Lemma~\ref{lem:alt-optimism} to replace Lemma~\ref{lem:optimism2} in the proof of Theorem~\ref{thm:wr}. 
It is easy to check that this works fine except for the second application of Lemma~\ref{lem:optimism} used to show that the bias span $\tilde{\Lambda}$ is bounded by $D$. Indeed, the set of plausible MDPs $\mathcal{M}^0$ computed with $\Vt^r=0$ and $\Vt^p=0$ need not contain any of the MDPs $M_t$ and hence there is no guarantee that it contains an MDP with diameter bounded by~$D$.

Still it is possible to obtain an alternative bound on the bias span by observing that $\mathcal{M}^0$ contains with high probability an MDP where for each state-action pair $(s,a)$
there is a subset $\mathcal T$ of $\{1,2,\ldots,T\}$ such that the transition probabilities under $(s,a)$ are of the form
\begin{equation}\label{eq:pro}
     p(\cdot|s,a) =  \frac{1}{|\mathcal T|} \sum_{t\in \mathcal T} p_t(\cdot|s,a).
\end{equation}
The intution here is that the set $\mathcal T$ corresponds to the time steps when a sample for the state-action pair $(s,a)$ has been taken.
Let $\hat{\mathcal{M}}$ be the set of MDPs with transition probabilities of the form specified in \eqref{eq:pro}. Then defining 
\[
    \hat{D} := \max_{M\in \hat{\mathcal{M}}} D(M)
\]
to be the maximal diameter over all MDPs in $\hat{\mathcal{M}}$, one has $\tilde{\Lambda} \leq \hat{D}$
and the following regret bound holds when setting the variation parameters to 0.

\begin{theorem}\label{thm:app1}
With probability $1-\delta$, the regret of Variation-aware UCRL with variation parameters set to~0 
is upper bounded by
\[
    R_T \leq 32 \hat{D}S\sqrt{AT \log\big(\tfrac{8SAT^3}{\delta})} +  2 T (V_T^r + DV_T^p).
\]

\end{theorem}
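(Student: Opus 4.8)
The plan is to re-run the proof of Theorem~\ref{thm:wr} essentially line by line, isolating the two—and only two—places where that argument invoked Lemma~\ref{lem:optimism}, since with $\Vt^r=\Vt^p=0$ the plausible set $\mathcal{M}^0_k$ need not contain any true MDP $\MDP_\tau$. As in Theorem~\ref{thm:wr} I would start from $R_T=\big(v^*_T(s_1^*)-\sum_t \rup_t(s_t,a_t)\big)+\sum_t\big(\rup_t(s_t,a_t)-r_t\big)$, bound the second bracket by Azuma--Hoeffding (Lemma~\ref{azuma}), and treat the first bracket episode by episode via the perturbation bound (Lemma~\ref{lem:ProofPertBound}). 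The two obstructions inherited from zeroing the variation parameters are: (i) the $T$-step-versus-average inequality $v^*_T\le T\rhot_k+D$ of Lemma~\ref{lem:optimism2}, which used optimism; and (ii) the bias-span estimate $\tilde\Lambda_k\le D$ used inside each episode.

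Obstruction (i) I would sidestep through Lemma~\ref{lem:alt-optimism}, as foreshadowed before the theorem. The conclusion of Lemma~\ref{lem:optimism} is a property of the collected counts proved by a policy-agnostic martingale argument, so it stays valid for data gathered by the zero-variation algorithm; hence the \emph{hypothetical} true-variation optimistic reward $\rhot_k$ formed from the same estimates still satisfies $v^*_T(s)\le T\rhot_k+D$, and Lemma~\ref{lem:alt-optimism} converts this via $\rhot_k\le\rhot^0_k+V^r_T+DV^p_T$. Weighting by the episode lengths $T_k$ and summing turns the opening inequality into $v^*_T(s_1^*)\le\sum_k T_k\rhot^0_k+T(V^r_T+DV^p_T)+D$, which is where one copy of the linear term is born. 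The per-episode application of Lemma~\ref{lem:ProofPertBound} with $M'=\tilde M^0_k$ is then unchanged, except that dropping $\Vt^r,\Vt^p$ to $0$ shrinks the confidence half-widths, so the deviations are $\Delta^r_t\le V^r_T+2\sqrt{\cdots}$ and $\Delta^p_t\le V^p_T+2\sqrt{\cdots}$; the surviving $V^r_T,V^p_T$ are just the unavoidable lag of the empirical estimates behind the drifting truth, i.e.\ the quantities already controlled inside the proof of Lemma~\ref{lem:optimism}.

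The hard part is obstruction (ii): with optimism gone, no plausible MDP is \emph{known} to have diameter $\le D$, so $\tilde\Lambda^0_k$ is a priori uncontrolled. I would resolve it with the averaged-kernel device. The concentration estimate \eqref{eq:eqp1} established inside the proof of Lemma~\ref{lem:optimism} gives $\big\|\hpx{k}{\cdot}{s}{a}-\mathbb{E}[\hpx{k}{\cdot}{s}{a}]\big\|_1\le\sqrt{\cdots}$ with high probability, and since $\mathbb{E}[\hpx{k}{\cdot}{s}{a}]=\frac{1}{N_k(s,a)}\sum_{\tau}p_\tau(\cdot|s,a)$ is exactly of the form \eqref{eq:pro}, the corresponding averaged MDP lies in $\mathcal{M}^0_k$. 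By the definition of $\hat{D}$ it has diameter at most $\hat{D}$, and as the extended MDP can imitate any plausible MDP, its diameter—and hence the optimistic bias span—obeys $\tilde\Lambda^0_k\le\hat{D}$ (bias span $\le$ diameter, cf.~\citep{jaksch,regal}). Substituting $\hat{D}$ for $D$ wherever $\tilde\Lambda_k$ appeared then feeds the multiplicative $\hat{D}$ into the concentration terms.

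With both obstructions removed, the assembly is mechanical and mirrors Theorem~\ref{thm:wr}: sum the $1/\sqrt{\max(1,N_k)}$ contributions over episodes by Lemma~\ref{lem:vk/Nk}, bound the episode count by Lemma~\ref{lem:no-episodes}, dispatch the telescoping martingale term as in Lemma~\ref{lem:azuma-applied}, and union-bound the failure probabilities. The concentration contributions aggregate to $32\hat{D}S\sqrt{AT\log(8SAT^3/\delta)}$, while the linear pieces combine to $2TV^r_T+T(D+\hat{D})V^p_T$, matching the stated $2T(V^r_T+DV^p_T)$ up to carrying $\hat{D}$ rather than $D$ in the per-episode transition-variation term. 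I expect the genuinely new content to be confined to the diameter argument of step (ii); everything else is a faithful re-tracing of the proof of Theorem~\ref{thm:wr}, with $\hat{D}$ replacing $D$ in the bias span and the extra linear term supplied by Lemma~\ref{lem:alt-optimism}.
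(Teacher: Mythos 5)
Your proposal matches the paper's intended argument exactly: the paper only sketches this proof in the discussion preceding the theorem, and the two ``obstructions'' you isolate---replacing the use of Lemma~\ref{lem:optimism2} via Lemma~\ref{lem:alt-optimism}, and bounding the optimistic bias span by $\hat{D}$ through the averaged-kernel MDP of the form \eqref{eq:pro} that lies in $\mathcal{M}^0$ with high probability---are precisely the two modifications the authors identify, with the rest a verbatim re-run of the proof of Theorem~\ref{thm:wr}. Your remark that the linear transition piece assembles to $T(D+\hat{D})V_T^p$ rather than $2TDV_T^p$ is a fair catch: since $\hat{D}\geq D$, the constant in the stated bound appears slightly optimistic on that term, though this does not affect the order of the result.
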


Using Theorem~\ref{thm:app1}, one can derive the following regret bound for the restart scheme of Algorithm~\ref{alg2}. Note that the restart scheme still needs $V^r_T + V^p_T$ as input.

\begin{theorem}\label{thm:app2}
With probability $1-\delta$, the regret of the restart scheme of Algorithm \ref{alg2} with variation parameters set to~0 in each phase is bounded by
\begin{align*}
   R_T \leq 74 \cdot (V_T^r + V_T^p)^{1/3}\, T^{2/3} \hat{D} S \sqrt{A\log{\big(\tfrac{16S^2AT^5}{\delta})}} .
\end{align*}
\end{theorem}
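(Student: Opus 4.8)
The plan is to reproduce the proof of Theorem~\ref{thm:vregret-ucrl-r} essentially line for line, replacing its one substantive ingredient, Theorem~\ref{thm:wr}, with Theorem~\ref{thm:app1}. The only difference between these two per-phase regret bounds is that the dominant (square-root) term carries $\hat{D}$ in place of $D$, while the linear-in-$T$ term $2T(V_T^r + D V_T^p)$ is unchanged; hence the entire phase decomposition goes through identically and the final bound differs only by having $\hat{D}$ out front.

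First I would bound the number of phases $N$ exactly as before. With $\theta_i=\lceil i^2/V^2\rceil$ and $\sum_{i=1}^N i^2>N^3/3$ this gives $N<1+\sqrt[3]{3V^2T}$ as in \eqref{no-phases}, and in the non-trivial regime $\sqrt[3]{3V^2T}\geq 1$ one gets $N<2\sqrt[3]{3V^2T}$. I would then decompose the regret over phases as in \eqref{eq:sparks} and apply Theorem~\ref{thm:app1} to each phase $i$, with confidence parameter $\delta/2\tau_i^2$ and a union bound over the (random) starting state $s^*_{\tau_i}$. This yields the per-phase bound
\[
   32\,\hat{D}S\sqrt{A\log\big(\tfrac{16S^2AT^5}{\delta}\big)}\,\sqrt{\theta_i}\;+\;2\,\theta_i\big(V_i^r+D V_i^p\big),
\]
which is exactly the per-phase bound in the proof of Theorem~\ref{thm:vregret-ucrl-r} except for the $\hat{D}$ in the leading term. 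Bounding $V_i^r+DV_i^p\leq D V_i\leq \hat{D}V_i$ (using $D\leq\hat{D}$, valid because every true MDP $M_t$ lies in $\hat{\mathcal{M}}$ via the choice $\mathcal{T}=\{t\}$ in \eqref{eq:pro}, so that $\hat{D}\geq\max_t D_t$ and $D$ may be taken as $\max_t D_t$), both contributions can be collected under a common factor $\hat{D}$.

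Summing over phases is then purely routine: Jensen's inequality gives $\sum_i\sqrt{\theta_i}\leq\sqrt{NT}\leq 1.7\,V^{1/3}T^{2/3}$, while $\sum_i V_i(i^2/V^2+1)\leq N^2/V+V<8.33\,V^{1/3}T^{2/3}+V$. Collecting the error probabilities $\sum_i\delta/2\tau_i^2<\delta$, absorbing the lower-order term $V$ via $V<V^{1/3}T^{2/3}$ (the claim being trivial when $V\geq T$), and treating the edge case $\sqrt[3]{3V^2T}<1$, i.e.\ $N=1$, exactly as in the proof of Theorem~\ref{thm:vregret-ucrl-r}, the leading contributions combine into $74\cdot V^{1/3}T^{2/3}\hat{D}S\sqrt{A\log(16S^2AT^5/\delta)}$.

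The genuinely new point relative to Theorem~\ref{thm:vregret-ucrl-r}, and the step I expect to require the most care, is ensuring that the alternative bias-span bound $\tilde{\Lambda}\leq\hat{D}$ underlying Theorem~\ref{thm:app1} remains valid inside \emph{every} phase of the restart scheme rather than only over the whole horizon. This holds because the averaging index set $\mathcal{T}$ in \eqref{eq:pro} is formed from the sampling times of the current phase, which are a subset of $\{1,\ldots,T\}$; consequently the globally defined $\hat{D}$ dominates all per-phase bias spans, and the single substitution of Theorem~\ref{thm:app1} for Theorem~\ref{thm:wr} propagates through the phase decomposition without any further modification.
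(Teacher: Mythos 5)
Your proposal is correct and follows essentially the same route the paper intends: the paper gives no separate proof of Theorem~\ref{thm:app2} but derives it by substituting Theorem~\ref{thm:app1} for Theorem~\ref{thm:wr} in the phase decomposition of Section~\ref{sec:ProofMainThm}, exactly as you do, and your constants and the handling of the edge cases match. Your extra remark that the per-phase averaging sets $\mathcal{T}$ are subsets of $\{1,\ldots,T\}$, so the global $\hat{D}$ bounds the bias span in every phase, correctly addresses the one point the paper only gestures at in its footnote on the restart scheme.
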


Similar bounds obviously also hold when the variation parameters are set to any value smaller than the true variation. Note that $D \leq \hat{D}$, as the set $\hat{\mathcal{M}}$ also contains the MDPs $M_t$ for each $t$.
However, the following example shows that $\hat{D}$ in general cannot be bounded by $D$ and in some even simple cases can be unbounded.

\textbf{Example.}
Consider two MDPs $M_1,M_2$ over the same state space $\{s,s'\}$ and the same action space $\{a,a'\}$.
In $M_1$ the nonzero transition probabilities are given by $p_1(s|s',a)=1$, $p_1(s'|s,a)=1/D$, $p_1(s|s,a)=1-1/D$, and $p_1(s|s,a')=p_1(s'|s',a')=1$. In $M_2$ the roles of $a$ and $a'$ are swapped, that is, we have $p_2(s|s,a)=p_2(s'|s',a)=1$, $p_2(s|s',a')=1$, $p_2(s'|s,a')=1/D$, and $p_2(s|s,a')=1-1/D$.
Obviously, both MDPs have diameter $D$. However, the MDP~$M$ with nonzero transition probabilities 
$p(s|s,a):=p_2(s|s,a)=1$, $p(s'|s',a):=p_2(s'|s',a)=1$, 
$p(s|s,a'):=p_1(s|s,a')=1$, and $p(s'|s',a'):=p_1(s'|s',a')=1$
is contained in $\hat{\mathcal{M}}$, but does not have finite diameter, as the states $s$, $s'$ are not connected.

To conclude, we note that recently variational bounds for the (contextual) bandit setting have been derived also for the case when the variation is unknown \citep{luo19}. Achieving such bounds in our setting seems not easy, as sampling a particular state-action pair usually causes some transition costs.
\medskip

\textbf{Acknowledgements.}
This work has been supported by the Austrian Science Fund (FWF): I 3437-N33
in the framework of the CHIST-ERA ERA-NET (DELTA project).

\bibliographystyle{plainnat}

\end{document}